\def\BibTeX{{\rm B\kern-.05em{\sc i\kern-.025em b}\kern-.08em
    T\kern-.1667em\lower.7ex\hbox{E}\kern-.125emX}}
\DeclareMathOperator*{\argmax}{\arg\!\max}
\newtheorem{theorem}{Theorem}
\newtheorem{lemma}{Lemma}
\newtheorem{definition}{Definition}
\crefname{table}{Table}{Tabs.}
\crefname{figure}{Fig.}{Figs.}
\Crefname{section}{Section}{Sections}
\Crefname{table}{Table}{Tables}
\Crefname{assumption}{Assumption}{Assumptions}
\crefname{algorithm}{Algorithm}{Algoritms}
\begin{document}

\title{Local Differential Privacy in Graph Neural Networks: a Reconstruction Approach}

\author{\IEEEauthorblockN{Karuna Bhaila}
\IEEEauthorblockA{\textit{University of Arkansas}\\
Fayetteville, USA \\
kbhaila@uark.edu}
\and
\IEEEauthorblockN{Wen Huang}
\IEEEauthorblockA{\textit{University of Arkansas}\\
Fayetteville, USA \\
wenhuang@uark.edu}
\and
\IEEEauthorblockN{Yongkai Wu}
\IEEEauthorblockA{\textit{Clemson University}\\
Clemson, USA \\
yongkaw@clemson.edu}
\and
\IEEEauthorblockN{Xintao Wu}
\IEEEauthorblockA{\textit{University of Arkansas}\\
Fayetteville, USA \\
xintaowu@uark.edu}
}

\maketitle              
\begin{abstract}
Graph Neural Networks have achieved tremendous success in modeling complex graph data in a variety of applications. However, there are limited studies investigating privacy protection in GNNs. In this work, we propose a learning framework that can provide node privacy at the user level, while incurring low utility loss. We focus on a decentralized notion of Differential Privacy, namely Local Differential Privacy, and apply randomization mechanisms to perturb both feature and label data at the node level before the data is collected by a central server for model training. Specifically, we investigate the application of randomization mechanisms in high-dimensional feature settings and propose an LDP protocol with strict privacy guarantees. Based on frequency estimation in statistical analysis of randomized data, we develop reconstruction methods to approximate features and labels from perturbed data. We also formulate this learning framework to utilize frequency estimates of graph clusters to supervise the training procedure at a sub-graph level. Extensive experiments on real-world and semi-synthetic datasets demonstrate the validity of our proposed model.
\end{abstract}

\begin{IEEEkeywords}
graph neural networks, local differential privacy, frequency estimation, learning from label proportions
\end{IEEEkeywords}

\section{Introduction}\label{sec:intro}
% Graph data are ubiquitous in the modern world allowing graph-structured representation for complex data in the realm of social networks~\cite{RevModPhys.74.47}, finance~\cite{weber2019antimoney}, biology\cite{DBLP:journals/jmlr/BaldiP03}, and so on. 
Graph data are ubiquitous in the modern world allowing graph-structured representation for complex data in the realm of social networks, finance, biology, and so on. 
Graph Neural Networks (GNNs) have been widely adopted in such domains to model the expressive nature of graph-structured data~\cite{4700287,DBLP:journals/tnn/WuPCLZY21}. GNNs rely on \textit{message-passing} mechanisms to propagate information between graph nodes and output embeddings that encode both node features and neighborhood features aggregated using graph adjacency information. These embeddings are used in predictive downstream tasks for meaningful applications such as drug discovery~\cite{DBLP:journals/bioinformatics/ZitnikAL18}, traffic prediction~\cite{DBLP:conf/nips/0001YL0020}, recommendation~\cite{DBLP:conf/wsdm/ChenW21}. 
This widespread prevalence of GNNs, however, raises concerns regarding the privacy of sensitive information whose leakage may lead to undesirable and even harmful consequences. GNNs have been shown to be vulnerable to several privacy risks including membership inference~\cite{DBLP:conf/tpsisa/OlatunjiNK21}, link re-identification~\cite{DBLP:conf/uss/HeJ0G021}, and attribute disclosure~\cite{DBLP:conf/kdd/ZhelevaG07}. This risk of data leakage is considerably higher in GNNs compared to traditional learning models due to the presence of additional graph structure information~\cite{DBLP:conf/tpsisa/OlatunjiNK21}. To ensure compliance with legal data protection guidelines~\cite{nist-privacy-framework} and for the protection of user privacy, GNNs must thus be trained and deployed in a privacy-preserving manner. 

In this paper, we aim to address such privacy concerns in GNNs. We focus on a specific scenario of node privacy wherein node-level features and labels are held locally by each user and the global graph structure is available to the server that hosts applications. The server could benefit from users' feature data which paired with graph topology can be utilized for embedding generation and/or predictive modeling via GNNs. However, collecting user feature and label data, possibly containing sensitive and identifying information, may incur serious privacy issues. To this end, Local Differential Privacy (LDP)~\cite{DBLP:conf/focs/KasiviswanathanLNRS08} is often adopted in data collection for training machine learning or releasing statistics in a private manner~\cite{DBLP:journals/iotj/ChamikaraBKLCA20}. Furthermore, it has been deployed in large-scale data-gathering of user behavior and usage statistics at Apple~\cite{apple-dp} and Google~\cite{DBLP:conf/ccs/ErlingssonPK14} motivating the integration of LDP in data collection for GNNs as well. 

\noindent\textbf{Challenges} 
The main challenge in training GNNs with privately collected data arises due to the utility-privacy trade-off of differentially private mechanisms. With randomization of data at an individual level, privatized data oftentimes misrepresents the data distribution of the population. A learning model that learns feature and label correlation from this data overfits the noise and is unable to achieve good utility on predictive and analytical tasks with unseen data. Furthermore, since GNNs propagate information throughout the graph to output node embeddings, the quality of the embeddings also suffers due to additive noise present in the training data after applying LDP mechanisms. 
%Therefore, training GNNs to achieve good utility while ensuring DP guarantees is a non-trivial task.

\noindent\textbf{Prior Work}
A few recent works have attempted to address node privacy in GNNs~\cite{DBLP:journals/corr/abs-2111-15521,DBLP:journals/corr/abs-2109-08907} but they enforce privacy only during training and/or model release which puts user information at risk if the server is malicious. Most notably, Sajadmanesh and Gatica-Perez~\cite{DBLP:conf/ccs/SajadmaneshG21} propose a node-level LDP framework in the distributed setting where features and labels are held private by the user, and the graph structure is known to the server. They propose an LDP protocol called multi-bit mechanism to perturb node features by extending the 1-bit mechanism~\cite{DBLP:conf/nips/DingKY17} to multidimensional features. The multi-bit mechanism randomly selects a subset of features for each user, transforms each selected feature value to either 1 or -1, 
and indiscriminately reports the value 0 for the remaining ones. To protect label privacy, node labels are perturbed using Randomized Response (RR)~\cite{10.2307/2283137}. A GCN-based multi-hop aggregator is then prepended to the GNN model for implicit denoising of both features and labels. They further implement forward loss correction~\cite{DBLP:conf/cvpr/PatriniRMNQ17} to supervise the learning process in the presence of noisy labels. However, the multi-bit mechanism potentially results in a huge loss of information, especially considering that the size of the sampled feature subset is set to 1 according to the analysis presented in the paper. 
%Nonetheless, benchmark node classification datasets generally have high-dimensional sparse feature matrices, as we show in \cref{sec:datasets}, which inadvertently aids to reduce variance during aggregation of features perturbed via the multi-bit mechanism. This may not always be the case during deployment in the real world which could affect the privacy-utility trade-off of the model.
The model is evaluated on several graph datasets with high-dimensional binary features where each feature has around $99\%$ zero values, as we show in \cref{sec:datasets}.  
This inadvertently aids to reduce variance during aggregation of features perturbed via the multi-bit mechanism. This may not be the case during deployment in the real world which could significantly affect the privacy-utility trade-off of the model.

\noindent\textbf{Contributions} 
To address the aforementioned challenges, in this work, we propose RGNN, a novel reconstruction-based GNN learning framework that can guarantee node privacy while incurring low utility loss. The focus in this work is on LDP and we randomize both feature and label data at the user level. To protect feature privacy, we extend previous work by Arcolezi et al.~\cite{DBLP:conf/cikm/ArcoleziCBX21} and implement Generalized Randomized Response with Feature Sampling (GRR-FS), a randomization framework with provable LDP guarantees. We theoretically derive frequency estimation and its variance for variables randomized with GRR-FS; these perturbed features and labels are collected by an aggregator as training data. To minimize utility loss caused by randomization, we propose reconstruction methods that approximate true features and label distributions from the perturbed data via frequency estimation. We use these methods to estimate data distributions at a sub-graph level and ultimately at the node level. We further introduce propagation during reconstruction to reduce estimation variance. We also formulate this learning framework to utilize frequency estimates of graph clusters to supervise the training procedure. 
The proposed framework can be used with any GNN architecture and does not require private data for training or validation. We perform extensive experiments on four real-world and two semi-synthetic datasets for the task of transductive node classification under varying privacy budgets. Empirical results show our method's effectiveness and superiority over baselines. 

\section{Related Work}
%Recent years have seen a growing interest in investigating the vulnerability of GNNs to various privacy issues. 
Privacy leakage in GNNs has become an unavoidable concern due to real-world implications of models trained on potentially sensitive data. %The leakage is quantified through different attacks aimed at a trained GNN model and/or its output embeddings to infer identifying and private information about users and their relationships~\cite{DBLP:conf/mobiquitous/DudduBS20}. 
In order to prevent privacy leakage and protect against malicious privacy attacks, various attempts have been made to develop privacy-preserving GNN algorithms, including the extension of Differential Privacy (DP) to GNNs. 
Wu et al.~\cite{DBLP:conf/sp/0011L0022} propose an edge-level DP algorithm by adding noise to the adjacency matrix as a pre-processing step. However, the edge-DP problem is different from the node privacy setting explored in this paper where the graph topology is non-private but the node features and labels are locally private. Zhang et al.~\cite{DBLP:journals/corr/abs-2210-04442} propose a node-level DP algorithm that decouples message-passing from feature aggregation and uses an approximate personalized PageRank to perform feature transformation instead of message-passing. This algorithm, however, requires a trusted aggregator to compute the approximate personalized PageRank matrix from private data and is more suited for private release of trained GNN models and their outputs than ensuring user-level privacy. Daigavane et al.~\cite{DBLP:journals/corr/abs-2111-15521} propose a node-level DP algorithm by integrating sub-graph sampling and the standard DP-SGD~\cite{DBLP:conf/ccs/AbadiCGMMT016} algorithm into the training framework to update GNN parameters, but this approach also requires a trusted aggregator to perform sampling before model training and is limited to a 1-layer GNN model. Olatunji et al.~\cite{DBLP:journals/corr/abs-2109-08907} utilize the student-teacher training workflow from the PATE framework~\cite{DBLP:conf/iclr/PapernotSMRTE18} to release a student GNN trained on public data and partly private labels obtained via the teacher GNN with DP guarantees. However, their framework necessitates the availability of public graph data which is not possible under LDP constraints considered in this work. Tran et al.~\cite{DBLP:conf/bigdataconf/TranLPKMKTW22} propose a randomization mechanism to ensure node-level and edge-level DP by optimizing the magnitude of noise injected into nodes’ features and the graph structure. However, node labels are considered non-private in their setting. Our work is most closely related to that of Sajadmanesh and Gatica-Perez~\cite{DBLP:conf/ccs/SajadmaneshG21} where node features and labels are individually perturbed before sending to a server for training. They utilize GCN-based multi-hop aggregation for denoising features, and labels and incorporate forward loss correction~\cite{DBLP:conf/cvpr/PatriniRMNQ17} to facilitate training with noisy labels. However, implicit denoising via aggregation may not be adequate to obtain accurate graph signals due to propagation of noise during aggregation. 

\section{Preliminaries}
%\subsection{Notations}\label{sec:notations}
We use $\mathcal{G} = (\mathcal{V, E})$ to denote an unweighted and undirected graph where $\mathcal{V}$ is a set of $N$ nodes and $\mathcal{E} \subseteq \mathcal{V} \times \mathcal{V}$ defines graph edges represented by an adjacency matrix $\mathbf{A} \in \{0,1\}^{N\times N}$ such that $\mathbf{A}_{u,v}=1$ if an edge exists between nodes $u$ and $v$. $\mathbf{X} = \{\mathbf{x}_1,\hdots\mathbf{x}_N\}$ represents node features, where $\mathbf{x}_v \in \mathbb{R}^d$ is the $d$-dimensional feature vector of node $v$ and we use $\mathbf{X}_i \in \mathbb{R}^N$ to denote the $i$-th feature column. In a transductive setting, a fraction of nodes denoted $\mathcal{V}^L$ are provided with labels. For each node $v \in \mathcal{V}^L$, a label vector $\mathbf{y}_v \in \{0,1\}^c$ s.t. $\sum \mathbf{y}_v = 1$ indicates its class membership. $\mathcal{V}^U = \mathcal{V} - \mathcal{V}^L$ is the set of unlabeled nodes whose labels are to be predicted. Further, we use $\mathcal{C}_r \subseteq \mathcal{V}$ to denote an arbitrary cluster of $\mathcal{G}$ containing a subset of nodes and we use $\mathbf{b}_r$ to refer to the label distribution of $\mathcal{C}_r$ which is defined as the average of labels $\mathbf{y}_v$ of all $v \in \mathcal{C}_r$. Finally $\mathbf{Y}\in\{0,1\}^{N\times c}$ is the node label matrix where $\mathbf{y}_v$ is an all-$0$ vector for any $v \in \mathcal{V}^U$, and $\mathbf{B}\in\mathbb{R}^{C\times c}$ denotes the label distribution for $C$ clusters. 

\begin{figure*}[htb]
    \centering
    \includegraphics[width=0.75\textwidth]{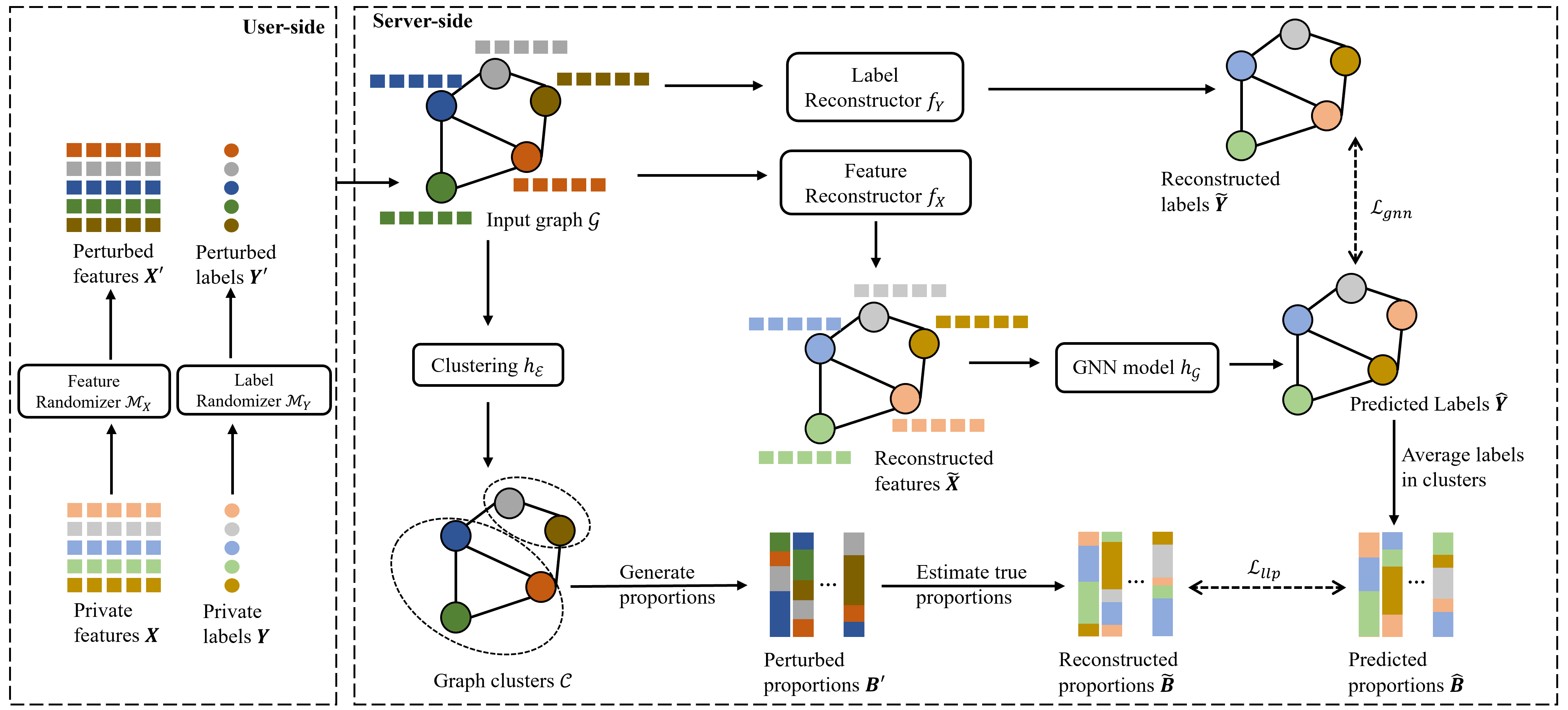}
    \caption{An overview of the proposed framework RGNN}
    \label{fig:model}
\end{figure*}

\subsection{Graph Neural Networks}
GNNs aim to learn node representations for nodes in a given graph by utilizing node attributes and edges. The learned representations are useful for downstream tasks such as node classification, link prediction, and graph classification. A $k$-layer GNN consists of $k$ sequential graph convolutional layers which implement \textit{message passing} mechanisms to update representations for nodes using aggregated representations from their neighbor nodes. The updating process of the $k$-th layer in GNN is generally formulated as
\begin{equation}
\begin{split}
    \mathbf{h}^k_{\mathcal{N}(v)} & = \operatorname{AGGREGATE}^{k}(\{\mathbf{h}^{k-1}_{u},\:\forall u \in \mathcal{N}(v)\}),\\
    \mathbf{h}^k_v & = \operatorname{UPDATE}^{k}(\mathbf{h}^k_{\mathcal{N}(v)};\mathbf{W}^k),
\end{split}
\end{equation}
where $\mathbf{h}^k_v$ is node $v$'s representation vector at the $k$-th layer, $\mathcal{N}(v)$ is its neighborhood set, and $\mathbf{W}^k$ defines the parameters of the learnable $\textnormal{UPDATE}$ function. GCN~\cite{DBLP:conf/iclr/KipfW17}, GraphSAGE~\cite{DBLP:conf/nips/HamiltonYL17}, and GAT\cite{DBLP:conf/iclr/VelickovicCCRLB18} are some of the most widely used GNNs.

\subsection{Differential Privacy} \label{sec:dp}
Since first proposed by Dwork et al.~\cite{DBLP:conf/tcc/DworkMNS06}, differential privacy has been established as the de-facto definition for privacy guarantees. In its seminal work, DP provided privacy guarantees for databases in the centralized setting where a trusted curator holds a database containing users' private information and answers queries about the database. 
In this work, we focus on a decentralized definition of DP referred to as Local Differential Privacy~\cite{DBLP:conf/focs/KasiviswanathanLNRS08}. As opposed to DP which requires a trusted aggregator, LDP provides privacy guarantees on the user side by letting each user perturb their own private data before sending it to an aggregator. This way, the aggregator cannot access the true and private data of any user alleviating the need for a trusted aggregator.

\begin{definition}\label{def:LDP}
$\epsilon$-LDP~\cite{DBLP:conf/focs/KasiviswanathanLNRS08}. Given $\epsilon > 0$, a randomized mechanism $\mathcal{M}$ satisfies $\epsilon$-local differential privacy, if for any pairs of user's private data $x$ and $x'$, and for any possible outputs $o \in Range(\mathcal{M})$, we have
%\begin{equation}
%    \textnormal{Pr}[\mathcal{M}(x) = o] \leq \textnormal{Pr}[\mathcal{M}(x') = o]\cdot e^{\epsilon}
%\end{equation}
$\textnormal{Pr}[\mathcal{M}(x) = o] \leq e^{\epsilon} \cdot \textnormal{Pr}[\mathcal{M}(x') = o]$, where $\epsilon$ is the privacy budget that controls the trade-off between utility and privacy of $\mathcal{M}$.
\end{definition}
Essentially, LDP ensures that an adversary is unable to infer the input values of any target individual using the output values obtained. To achieve LDP in data statistics and analysis, mechanisms such as randomized response, histogram encoding, unary encoding, or local hashing are applied during the collection of user data that are categorical in nature~\cite{DBLP:journals/sensors/WangZFY20}.

\subsection{Sampling for Privacy Amplification}\label{sec:DP_sampling}
A widely accepted approach to strengthen privacy guarantees of DP mechanisms is to take advantage of the aggregator/adversary's uncertainty by adding a sampling step before privatization~\cite{DBLP:conf/ccs/LiQS12, DBLP:conf/cikm/ArcoleziCBX21}. Sampling exploits such uncertainty and allows for the relaxation of privacy constraints in a randomization mechanism while ensuring strict privacy guarantees.
\begin{lemma}\label{lem:amplify} Amplification Effect of Sampling~\cite{DBLP:conf/ccs/LiQS12}. 
    Let $\mathcal{M}$ denote an algorithm that guarantees $\epsilon'$-DP over some data. Also, let $\mathcal{M}^{\beta}$ denote an algorithm that first samples tuples from the data with probability $\beta$ and then applies $\mathcal{M}$ on the sampled data. Then, $\mathcal{M}^{\beta}$ satisfies DP with $\epsilon = \ln\left(1+\beta(e^{\epsilon'}-1)\right)$.
\end{lemma}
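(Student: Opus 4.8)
The plan is to verify the $\epsilon$-DP inequality for $\mathcal{M}^{\beta}$ directly from the definition, reducing the sampled mechanism back to the base mechanism $\mathcal{M}$ by conditioning on which records survive sampling. I would take neighboring datasets $D$ and $D'$ under the add/remove relation, say $D' = D \cup \{t\}$, and fix any measurable output set $S \subseteq \mathrm{Range}(\mathcal{M})$. Because sampling is performed independently per record with probability $\beta$, the records common to both datasets are sampled identically in the two executions; I would therefore let $T$ denote the random subset of these common records that is retained and condition on it.

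Conditioned on $T$, running $\mathcal{M}^{\beta}$ on $D$ produces exactly $\mathcal{M}(T)$, whereas running it on $D'$ produces $\mathcal{M}(T)$ with probability $1-\beta$ (when $t$ is not sampled) and $\mathcal{M}(T \cup \{t\})$ with probability $\beta$ (when $t$ is sampled). Taking expectation over $T$ gives
$$\Pr[\mathcal{M}^{\beta}(D) \in S] = \mathbb{E}_T\!\big[\Pr[\mathcal{M}(T) \in S]\big],$$
$$\Pr[\mathcal{M}^{\beta}(D') \in S] = \mathbb{E}_T\!\big[(1-\beta)\Pr[\mathcal{M}(T)\in S] + \beta\,\Pr[\mathcal{M}(T\cup\{t\})\in S]\big].$$
The key observation is that $T$ and $T \cup \{t\}$ are themselves neighboring datasets, so the $\epsilon'$-DP guarantee of $\mathcal{M}$ applies to them pointwise.

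I would then substitute the bound $\Pr[\mathcal{M}(T \cup \{t\}) \in S] \le e^{\epsilon'}\Pr[\mathcal{M}(T)\in S]$ into the second expression, factor out $\mathbb{E}_T[\Pr[\mathcal{M}(T)\in S]]$, and collect terms to obtain $\Pr[\mathcal{M}^{\beta}(D') \in S] \le \big(1 + \beta(e^{\epsilon'}-1)\big)\,\Pr[\mathcal{M}^{\beta}(D) \in S]$, which is exactly the factor $e^{\epsilon}$ with $\epsilon = \ln(1+\beta(e^{\epsilon'}-1))$. Since DP is a two-sided requirement, I would close by verifying the reverse inequality using the lower bound $\Pr[\mathcal{M}(T\cup\{t\})\in S] \ge e^{-\epsilon'}\Pr[\mathcal{M}(T)\in S]$; a short computation shows the resulting factor $1/(1-\beta(1-e^{-\epsilon'}))$ is dominated by $1+\beta(e^{\epsilon'}-1)$, so the ``add'' direction is the binding one and fixes the final budget.

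I expect the main obstacle to be conceptual rather than computational: correctly setting up the coupling so that the two executions share the same sampled common records $T$, and then recognizing that $T$ versus $T \cup \{t\}$ is precisely the adjacency to which $\mathcal{M}$'s guarantee applies. Once this conditioning is in place the algebra is routine; the only point demanding care is checking that the looser reverse direction does not force a larger $\epsilon$, which the monotonicity of the collected factor in $\beta$ settles.
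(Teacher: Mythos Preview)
The paper does not prove this lemma at all: it is stated as a known result with a citation to Li, Qardaji, and Su (CCS~2012) and then invoked as a black box inside the proof of Theorem~\ref{thm:GRR-FS_privacy_budget}. Your proposal is a correct, self-contained proof and in fact matches the standard argument from that cited source---condition on the sampled common records, reduce to the base mechanism on the neighboring pair $T$ versus $T\cup\{t\}$, and collect the mixture factor $(1-\beta)+\beta e^{\epsilon'}$. Your check that the reverse direction yields the weaker ratio $1/(1-\beta(1-e^{-\epsilon'}))\le 1+\beta(e^{\epsilon'}-1)$ is also right (the inequality reduces to $\beta\le 1$ after writing $e^{\epsilon'}-1=e^{\epsilon'}(1-e^{-\epsilon'})$), so the ``add'' direction is indeed binding.
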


\subsection{Learning from Label Proportions}\label{sec:llp}
Learning from label proportions (LLP) is an alternative supervision method introduced in~\cite{DBLP:journals/jmlr/QuadriantoSCL09} to train predictive models when instance labels are too difficult or expensive to obtain. 
% This mechanism is applicable in real-world use cases such as spam filtering and favorable voter identification where group-level statistics are much more cheaply available than instance-level information.
In this setting, instances are grouped into iid bags and only the label distributions of these bags are known to the learning model. The goal is then to train an instance label predictor by training the learner on the bag-level aggregate information. 
%Following the seminal work, LLP has been adopted into various learning models including deep learning~\cite{DBLP:conf/icdm/ArdehalyC17a}. 

Denote by $\mathcal{B}_r$, an arbitrary bag containing instances with corresponding labels. The learner cannot access the instance labels $\mathbf{y}$ and only receives bag proportions $\mathbf{b}_r$ computed as an average over the instance labels in bag $\mathcal{B}_r$.
Then an instance label predictor can simply compute its prediction of the bag proportion $\mathbf{\hat{b}}_r$ by averaging over the predicted labels $\mathbf{\hat{y}}$ of instances in bag $\mathcal{B}_r$. Unlike a traditional learning model, this LLP-based learner calculates training loss between the true and predicted distributions of the bags. For a bag $\mathcal{B}_r$, this proportion loss can be computed as the KL divergence between true and predicted proportions as
$D_{KL}(\mathbf{\hat{b}}_r||\mathbf{b}_r)$.
The objective of the learner is to minimize $D_{KL}(\cdot)$ for all bags in the training dataset. It has been previously shown that minimizing bag proportion prediction error guarantees a good instance label predictor under the assumption that the labels are not evenly distributed in all bags~\cite{yu2015learning}.

\section{Reconstruction-based Private GNN}
%Given the notation in \cref{sec:notations}, 
We formally define the problem of node-level LDP in GNNs. 
Let $\mathcal{M}$ denote some randomization mechanism that provides $\epsilon$-LDP. Then, $\mathbf{X}' = \mathcal{M}(\mathbf{X})$ and $\mathbf{Y}' = \mathcal{M}(\mathbf{Y})$ refer to user feature and label information collected using $\mathcal{M}$ to ensure user privacy. Also, let $h(\cdot)$ define a GNN model for node classification that takes node features and adjacency matrix as the input and outputs label predictions. In this decentralized differential privacy setting, the server responsible for model training has access to the non-private adjacency $\mathbf{A}$, randomized features $\mathbf{X'}$ and randomized labels $\mathbf{Y'}$ but not the true private features $\mathbf{X}$ and labels $\mathbf{Y}$. In this scenario, we aim to train a GNN on $\mathbf{X}'$ and $\mathbf{Y}'$ to learn a mapping $h(\mathbf{X}',\mathbf{A};\mathbf{W})\rightarrow \mathbf{\hat{Y}}$ that can estimate accurate labels for $v \in \mathcal{V}^U$.

\cref{fig:model} illustrates the overall framework of RGNN which is composed of two perturbation models: $\mathcal{M}_X$ and $\mathcal{M}_Y$ implemented at the node level to inject noise into the feature and label data respectively. %At the node level, $\mathcal{M}_X$ perturbs individual user feature value with some probability of preserving the correct value for each feature dimension. Similarly, $\mathcal{M}_Y$ flips a node's true label to another label with some probability specified by a pre-defined privacy budget. 
On the server side, the feature reconstructor $f_X$ takes the non-private graph structure $\mathcal{G}$ and the perturbed features $\mathbf{X}'$ as input to derive estimated features $\mathbf{\Tilde{X}}$ via propagation and frequency estimation for all nodes in the graph. The label reconstructor $f_Y$ also takes the graph structure $\mathcal{G}$ and perturbed labels $\mathbf{Y}'$ as input to reconstruct labels for the labeled set of nodes $\mathcal{V}^L$. Additionally, an edge-based graph clustering algorithm $h_\mathcal{E}$ partitions the graph into a set of clusters $\mathbf{\mathcal{C}}$. For each cluster $\mathcal{C}_r \in \mathbf{\mathcal{C}}$, a cluster-level label proportion denoted as $\mathbf{b}_r'$ is computed by aggregating over the nodes in $\mathcal{C}_r$. By means of frequency estimation, an accurate cluster-level proportion $\mathbf{\Tilde{b}}_r$ is derived from the perturbed cluster label proportion. A GNN model $h_\mathcal{G}$ is then trained on the reconstructed features and labels with additional supervision provided by the reconstructed cluster proportions. Next, we discuss each component in more detail.        

\subsection{Reconstruction with Private Features}\label{sec:f_X}
LDP frequency oracles used to obtain private data statistics can also be adopted during training data collection to provide privacy protection to users via plausible deniability. However, in contrast to the one-dimensional data collected for such purpose, data for model training are usually multi-dimensional with mutual dependencies between features. Ensuring formal privacy guarantees then requires users to report privatized values for each feature. In other words, the total privacy budget gets compounded along the feature dimension and results in a high noise rate often at the cost of model performance~\cite{DBLP:journals/sensors/WangZFY20}. To circumvent this issue, sampling and noisy data generation techniques have been adopted in~\cite{DBLP:conf/ccs/SajadmaneshG21, DBLP:conf/cikm/ArcoleziCBX21} where each user applies randomization techniques only on a few sampled features and reports some default or noisy values for the non-sampled features. The RS+FD method proposed in~\cite{DBLP:conf/cikm/ArcoleziCBX21} samples one of $d$ total features and applies an LDP protocol on it. For the remaining $d - 1$ features, the method reports completely random values drawn from a uniform distribution over each feature domain. On the other hand, the multi-bit mechanism in~\cite{DBLP:conf/ccs/SajadmaneshG21} randomizes one of $d$ features but reports a default value of 0 for the rest. As discussed prior, the multi-bit mechanism may require high feature matrix sparsity to achieve good utility. 

Therefore, in this paper, we focus on sampling and fake data generation using a uniform distribution. Still, in high-dimensional feature settings, reporting $d - 1$ features from a uniform distribution may result in highly noisy data unsuitable for training. So, we extend this method to a more general case such that each user samples $m$ of $d$ features and randomizes them independently. Consequently, the privacy budget $\epsilon$ is compounded over $m$ sampled features, which is more favorable than compounding over $d$ features. 
On the client side, a user samples $m$ features then implements the Generalized Randomized Response method~\cite{DBLP:conf/edbt/0009WH16} on the sampled features, i.e.: the user reports their true value with probability $p$ and reports any other value with probability $q$. Here, $p$ and $q$ are set as $\frac{e^{\epsilon_x}}{e^{\epsilon_x}+\gamma_i-1}$ and $\frac{1}{e^{\epsilon_x}+\gamma_i-1}$ respectively where $\gamma_i$ indicates the domain size of the sampled feature $x$ and $\epsilon_x$ is the allocated privacy budget for feature $x$. This probabilistic transition can also be represented using a matrix \mbox{$\mathbf{P} \in \mathbb{R}^{\gamma_i\times\gamma_i}$} whose diagonal entries equal $p$ and the non-diagonal entries equal $q$. For $d-m$ features, the user reports a random value drawn from a uniform distribution over each feature. We call this method Generalized Randomized Response with Feature Sampling (GRR-FS). 

\begin{theorem}\label{thm:GRR-FS_privacy_budget}
    GRR-FS satisfies $\epsilon_X$-LDP where $\epsilon_X = \ln{\left(1 + \frac{m}{d}(e^{m\epsilon_x}-1)\right)}$.
\end{theorem}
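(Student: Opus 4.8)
The plan is to build the bound up in three layers and close with the amplification result already available as \cref{lem:amplify}. First I would handle the single-feature base case. For one sampled feature of domain size $\gamma_i$, Generalized Randomized Response outputs the true value with probability $p=\frac{e^{\epsilon_x}}{e^{\epsilon_x}+\gamma_i-1}$ and each of the other $\gamma_i-1$ values with probability $q=\frac{1}{e^{\epsilon_x}+\gamma_i-1}$. Over any two private inputs and any fixed output the likelihood ratio is at most $p/q=e^{\epsilon_x}$, so a single perturbed feature is $\epsilon_x$-LDP straight from \cref{def:LDP}.

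Second, I would identify the inner mechanism $\mathcal{M}$ to which the amplification lemma is applied as ``perturb the $m$ sampled features with GRR.'' Because these $m$ coordinates are randomized independently and each view is $\epsilon_x$-LDP, the likelihood ratio for two arbitrary inputs factorizes across the sampled coordinates and is bounded by $\prod_{i=1}^{m} e^{\epsilon_x}=e^{m\epsilon_x}$; that is, sequential composition of LDP mechanisms shows $\mathcal{M}$ is $\epsilon'$-LDP with $\epsilon'=m\epsilon_x$. This accounts for the $e^{m\epsilon_x}$ factor in the target expression.

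Third, I would recast the feature-sampling step as the sampling operation of \cref{lem:amplify}. The observation that legitimizes this is that every non-sampled feature is reported as a fresh draw from the uniform distribution on its domain, so its output law does not depend on the private value and it behaves exactly like a coordinate omitted from the sample. Since each feature enters the size-$m$ sample with marginal probability $\beta=m/d$, feeding $\epsilon'=m\epsilon_x$ and $\beta=m/d$ into \cref{lem:amplify} returns $\epsilon_X=\ln\!\bigl(1+\tfrac{m}{d}(e^{m\epsilon_x}-1)\bigr)$, which is the claimed bound.

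I expect the third step to be the main obstacle, because \cref{lem:amplify} is stated for independent per-element inclusion at rate $\beta$, whereas GRR-FS draws a subset of a fixed size $m$, so the inclusion indicators of the $d$ features are negatively correlated rather than independent. To make the reduction rigorous I would either show that the fixed-size scheme is no worse, in the worst-case likelihood-ratio sense, than independent inclusion with the same marginal $m/d$, or re-derive the amplified ratio from scratch: write $\Pr[\mathcal{M}_{\textnormal{GRR-FS}}(\mathbf{x})=o]$ as the average over the $\binom{d}{m}$ equally likely sampled subsets, factor out the uniform fake-data terms (which are common to numerator and denominator and cancel), and track how a disagreement between two inputs on a given coordinate contributes only when that coordinate is sampled. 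Checking that the fake-data reports on the unsampled coordinates truly contribute identically on both sides, so that only the sampled coordinates drive the ratio, is the delicate bookkeeping on which the argument hinges.
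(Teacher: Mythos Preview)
Your three-layer argument---single-feature GRR is $\epsilon_x$-LDP, composition over the $m$ sampled coordinates gives $m\epsilon_x$, then amplification by sampling at rate $\beta=m/d$ via \cref{lem:amplify} yields $\epsilon_X$---is exactly the route the paper takes; its proof states these same steps almost verbatim and in the same order. Your scrutiny of the fixed-size versus independent-inclusion mismatch in invoking \cref{lem:amplify} actually goes beyond the paper, which simply applies the lemma with $\beta=m/d$ without discussing this subtlety.
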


\begin{proof}
    Consider a feature vector $\mathbf{x} = \{x_1,\hdots,x_d\}$ on which GRR-FS is applied with feature sampling probability $\beta = m/d$ and privacy budget $\epsilon_x$ on each sampled feature. Let $\gamma_i$ refer to the domain size of feature $x_i$ s.t. $x_i \in \{1,\hdots,\gamma_i\}$. The private feature vector $\mathbf{x}'=\{x'_1,\hdots,x'_d\}$ is obtained as:
    \begin{align}
        x'_i = \left\{ \begin{array}{rl}
                \operatorname{GRR}\left(\epsilon_x, x_i\right) & \mbox{with probability } \beta \\
                \operatorname{Unif}(1,\gamma_i) & \mbox{with probability } 1-\beta, 
    \end{array}\right.
    \end{align}
    where $\operatorname{GRR}(\cdot)$ denotes the response obtained by applying randomized response and $\operatorname{Unif}(\cdot)$ denotes a sample drawn from a discrete uniform distribution. For each of the $m$ sampled features, the sampling and reporting satisfy $\epsilon_x$-LDP, and for each of the $d-m$ features, the reporting ensures total randomness. Since $\epsilon$-LDP composes~\cite{DBLP:journals/fttcs/DworkR14}, the mechanism composes as $m\cdot\epsilon_x$. Additionally, since we incorporate sampling at the rate of $\beta$ for each user, we are able to leverage privacy amplification enabled by the combination of sampling with a randomization mechanism as shown in \cref{lem:amplify}. Therefore, for GRR-FS with sampling rate $\beta$ and applied privacy budget $\epsilon_x$ for each of $m$ sampled features, the mechanism satisfies LDP with \mbox{$\epsilon_X = \ln{\left(1 + \frac{m}{d}(e^{m\epsilon_x}-1)\right)}$}.
\end{proof}

As data randomized via an LDP protocol is invariant to post-processing~\cite{DBLP:journals/fttcs/DworkR14}, the server can utilize the collected data without compromising user privacy. Furthermore, as the server queries each user for their feature values only once, privacy degradation via repeated queries is also avoided. Also, GRR-FS can be applied to continuous node features after discretizing them into categorical attributes. However, directly using noisy features for learning can significantly impact the model's capability of obtaining high-quality embeddings and generalizing to new or unseen data. In this paper, we propose to utilize statistical frequency estimation techniques to approximately reconstruct user features and labels and better facilitate model training.

We first derive frequency estimation for GRR-FS. On the server side, the aggregator collects responses from $n$ users but is unaware whether an individual response was derived from GRR or sampled from the uniform distribution. For an arbitrary feature $x_i$, the aggregator can compute the frequency estimate for its $j$-th value, denoted by $\Tilde{\pi}_j$, as follows:
\begin{equation}\label{eq:grr_fs_estimate}
   \Tilde{\pi}_j = \frac{\lambda'_jd}{m(p-q)} + \frac{m-d-m\gamma_iq}{m\gamma_i(p-q)},
\end{equation}
with estimated variance:
\begin{equation}\label{eq:grr_fs_variance}
    \hat{\operatorname{var}}(\Tilde{\pi}_j) = \frac{d^2\tilde{\lambda}_j(1-\Tilde{\lambda}_j)}{m^2(n-1)(p-q)^2},         
\end{equation}
where, $\Tilde{\lambda}_j = \frac{1}{d}\left(\Tilde{\pi}_jm\gamma_i(p-q) + m(q\gamma_i-1) + d\right)$.

\begin{theorem}
    For an arbitrary attribute $x_i$ for $i\in\{1,\hdots,d\}$, randomized using GRR-FS with sampling probability $m/d$, the estimation $\Tilde{\pi}_j$ in (\ref{eq:grr_fs_estimate}) is an unbiased estimation of the true proportion of users having the $j$-th value with estimated variance $\hat{\operatorname{var}}(\Tilde{\pi}_j)$ in (\ref{eq:grr_fs_variance}).
\end{theorem}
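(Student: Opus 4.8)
The plan is to treat the report for a single feature $x_i$ as one categorical random variable whose distribution is an explicit affine function of the true proportion $\pi_j$, invert that relation to obtain the estimator, and then read off unbiasedness and variance directly from the linearity. First I would compute the per-user \emph{observation probability} $\lambda_j := \Pr[x'_i = j]$ by conditioning on whether $x_i$ was among the $m$ sampled features. With probability $\beta = m/d$ the value passes through $\operatorname{GRR}$, which outputs $j$ with probability $\pi_j p + (1-\pi_j)q = \pi_j(p-q)+q$; with probability $1-\beta$ the value is drawn uniformly and equals $j$ with probability $1/\gamma_i$. This yields
\[
\lambda_j = \frac{m}{d}\big(\pi_j(p-q)+q\big) + \frac{d-m}{d}\cdot\frac{1}{\gamma_i},
\]
which is affine in $\pi_j$ with slope $m(p-q)/d$. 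Solving for $\pi_j$ and collecting the constant terms over the common denominator $m\gamma_i(p-q)$ reproduces exactly the estimator in (\ref{eq:grr_fs_estimate}) once the population probability $\lambda_j$ is replaced by the empirical frequency $\lambda'_j$ of value $j$ among the $n$ collected reports.

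For unbiasedness I would observe that $\lambda'_j = \frac{1}{n}\sum_u \mathbf{1}[x_i'^{(u)} = j]$ is an average of independent indicators, and that summing each user's observation probability over the population gives $\mathbb{E}[\lambda'_j] = \lambda_j$ (the only per-user dependence is through whether the true value is $j$, which aggregates to $n\pi_j$). Since $\Tilde{\pi}_j$ is an affine function of $\lambda'_j$ with coefficients that depend only on the known constants $d, m, p, q, \gamma_i$, linearity of expectation together with the inversion above gives $\mathbb{E}[\Tilde{\pi}_j] = \pi_j$. For the variance, the same affine relation gives $\operatorname{var}(\Tilde{\pi}_j) = \big(\tfrac{d}{m(p-q)}\big)^2 \operatorname{var}(\lambda'_j)$, which already accounts for the leading factor $d^2/\big(m^2(p-q)^2\big)$ in (\ref{eq:grr_fs_variance}); it then remains to estimate $\operatorname{var}(\lambda'_j)$, the variance of a sample proportion, using the unbiased sample-variance form (which is where the $n-1$ denominator enters) and to substitute the plug-in estimate $\Tilde{\lambda}_j$ of the observation probability obtained by feeding $\Tilde{\pi}_j$ back through the affine map.

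The hard part will be the variance step rather than unbiasedness. The reported indicators are independent but \emph{not} identically distributed, since a user's probability of reporting $j$ takes one of two values according to whether their true value equals $j$, so the exact variance is a $\pi_j$-weighted mixture $\beta p$- and $\beta q$-type terms. The estimated variance therefore leans on the standard LDP simplification of treating the aggregate count as a single binomial governed by the plug-in observation probability $\Tilde{\lambda}_j$, and the remaining effort is the careful bookkeeping needed to re-express $\Tilde{\lambda}_j$ and the leading constant in the precise closed form stated in (\ref{eq:grr_fs_variance}). I would verify the algebra of the inversion and the plug-in substitution as the final, most error-prone consistency check.
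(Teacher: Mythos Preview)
Your proposal is correct and follows essentially the same route as the paper: compute the observation probability $\lambda_j$ by conditioning on the sampling event, invert the affine map to obtain the estimator, read off unbiasedness by linearity, and then push the affine factor through to reduce the variance question to that of the empirical proportion $\lambda'_j$, which is handled via the binomial model with the $n-1$ correction and the plug-in $\Tilde{\lambda}_j$. If anything you are slightly more careful than the paper, which simply asserts that $n_j$ is binomial with parameters $(n,\lambda_j)$; you correctly flag that the per-user indicators are independent but not identically distributed (Poisson binomial), so the binomial variance is the standard LDP-style approximation rather than the exact law.
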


\begin{proof}
    For an arbitrary feature $x_i$, let $\pi_j$ denote the true proportion of users who have the $j$-th value for $j\in[1,\hdots,\gamma_i]$. Under GRR-FS, the probability of observing the $j$-th value is
    \begin{align}
        \lambda_j &= \frac{1}{n}\left[\frac{m}{d}(n\pi_jp + n(1-\pi_j)q_2) + \left(1-\frac{m}{d}\right)\frac{n}{\gamma_i}\right] \\
        &= \frac{1}{d\gamma_i}\left[m\pi_j\gamma_i(p-q) + mq\gamma_i + d - m\right] \label{eq:prob_of_j_value}.
    \end{align}
    An unbiased estimator of $\lambda_j$ is the observed proportion of the $j$-th value, $\lambda'_j=n_j/n$, where $n_j$ refers to the number of users who report the $j$-th value. Then from (\ref{eq:prob_of_j_value}), it follows
    \begin{align}\label{eq:estimate}
        \Tilde{\pi}_j = \frac{\lambda'_jd}{m(p-q)} + \frac{m-d-m\gamma_iq}{m\gamma_i(p-q)^2}.
    \end{align}
    From (\ref{eq:estimate}), we have
    \begin{align}
        \operatorname{var}(\Tilde{\pi}_j) = \frac{d^2\operatorname{var}(n_j)}{n^2m^2(p-q)^2}.
    \end{align}
    We observe that $n_j$ follows the binomial distribution with parameters $n$ and $\lambda$ such that $\operatorname{var}(n_j) = n\lambda_j(1-\lambda_j)$ and we get
    \begin{align}\label{eq:grr_fs_true_var}
        \operatorname{var}(\Tilde{\pi}_j) = \frac{d^2\lambda_j(1-\lambda_j)}{nm^2(p-q)^2}.
    \end{align}
    Since $\mathbb{E}(n_j) = n\lambda_j$ and $\mathbb{E}(n_j^2) = n\lambda_j + n(n-1)\lambda_j$, we have
    \begin{align}
        \mathbb{E}\left[\frac{\Tilde{\lambda}_j(1-\Tilde{\lambda}_j)}{n-1}\right] = \frac{\lambda_j(1-\lambda)}{n},
    \end{align}
    where we use $\Tilde{\lambda}_j$ to mean the estimated probability of observing the $j$-th value obtained as $\Tilde{\lambda}_j = \frac{1}{d}\left(\Tilde{\pi}_jm\gamma_i(p-q) + m(q\gamma_i-1) + d\right)$.
    Finally, an estimation of the variance in (\ref{eq:grr_fs_true_var}) is given as
    \begin{align}
        \hat{\operatorname{var}}(\Tilde{\pi}_j) = \frac{d^2\Tilde{\lambda}_j(1-\Tilde{\lambda}_j)}{m^2(n-1)(p-q)^2}.
    \end{align}

\end{proof}

\begin{algorithm}[htb]
\caption{Node Feature Reconstruction}\label{alg:f_X}
\KwIn{$\mathcal{G}=(\mathcal{V},\mathcal{E})$, perturbed features $\mathbf{X}'$, feature domain sizes $\gamma = \{\gamma_1,\hdots,\gamma_d\}$, number of features $d$, number of sampled features $m$, privacy budget $\epsilon$, number of hops $K$}
\KwOut{Reconstructed features $\mathbf{\Tilde{X}}$}

\SetKwProg{Fn}{Function}{:}{end}
\Fn{$f_X(\mathcal{G},\mathbf{X}',\gamma,\epsilon,d,m,K)$}{

\For{$i=1,\hdots,d$}{
    $\boldsymbol{\lambda}' \leftarrow \operatorname{one-hot}(\mathbf{X}'_i)$

    \For{$k=1,\hdots,K$}{
        \For{$v \in \mathcal{V}$}{
        $\boldsymbol{\lambda}'_{v} \leftarrow \operatorname{MEAN}(\{\boldsymbol{\lambda}'_{v}\} \cup \{\boldsymbol{\lambda}'_{u}, \forall u \in \mathcal{N}(v)\})$
        }
    }
    
    $p \leftarrow \frac{e^{\epsilon}}{e^{\epsilon}+\gamma_i-1}$ and $q \leftarrow \frac{1}{e^{\epsilon}+\gamma_i-1}$
    
    \For{$v \in \mathcal{V}$}{
        \For{$j=1,\hdots,\gamma_i$}{
                $\Tilde{\pi}_{vj} \leftarrow \frac{d{\lambda}'_{vj}}{m(p-q)} + \frac{m-d-m\gamma_iq}{m\gamma_i(p-q)}$
        }
    
        $\Tilde{x}_v \leftarrow \argmax\limits_j\Tilde{\pi}_{vj}$
    }
    
    $\mathbf{\Tilde{x}}_i \leftarrow \{\Tilde{x}_v, \forall v\in\mathcal{V}\}$
}

\Return $\mathbf{\Tilde{X}} = \{\mathbf{\Tilde{x}}_1,\hdots,\mathbf{\Tilde{x}}_d\}^\intercal$
}
\end{algorithm}

For an arbitrary feature $x_i$, (\ref{eq:grr_fs_estimate}) gives us an unbiased estimate of the proportion of nodes with the $j$-th value over the whole graph. Pointwise reconstruction, however, is still intractable. Nonetheless, we can estimate feature occurrences $\boldsymbol{\Tilde{\pi}}$ at a sub-graph level for a reasonable number of nodes. In this case, $n$ becomes the number of nodes in the sub-graph, and $\lambda'_j$ is the observed proportion of $x_i$'s $j$-th value in the sub-graph. We then extend this sub-graph level estimation to node neighborhoods as shown in \cref{alg:f_X}. For a node $v \in \mathcal{V}$, we first obtain $\boldsymbol{\lambda}'_v$, the proportional frequency of an arbitrary feature $x_i$ from its neighborhood. Using (\ref{eq:grr_fs_estimate}), we estimate $v$'s true neighborhood feature distribution as $\tilde{\pi}_{vj}$ for the $j$-th value in the domain of $x_i$. Furthermore, considering that nodes in a graph usually share similar characteristics with their neighbors due to homophily~\cite{doi:10.1146/annurev.soc.27.1.415}, we reason that the feature distribution of a node should be very close to the feature distribution in its neighborhood. Ultimately, we obtain the highest probable feature value from the reconstructed neighborhood feature distribution $\Tilde{\pi}_v$ and assign it as $v$'s new reconstructed feature $\Tilde{x}$. In a simpler case with binary feature values, we can assign \mbox{$\Tilde{\pi}_{v2}$ ($\Tilde{\pi}_{v1}$)} to be node $v$'s new reconstructed feature to incorporate the uncertainty of the estimates.

That said, node degrees in real-world graphs generally follow a power law distribution resulting in a high number of nodes having low degrees~\cite{doi:10.1126/science.286.5439.509}. This variation in node degrees affects the reconstruction process as the neighborhood size directly influences the variance in estimating the true feature distribution. In (\ref{eq:grr_fs_variance}), for a fixed privacy budget $\epsilon_x$, we obtain a fixed transition probability $p$; the variance is then inversely proportional to $n$ which refers to the neighborhood size. This results in low-degree nodes having a higher variance in their estimates which may lead to inaccurate reconstruction. To minimize this effect, we implement multi-hop feature aggregation to increase the neighborhood size for low-degree nodes before computing the estimate as shown in~\cref{alg:f_X}.

\subsection{Reconstruction with Private Labels}\label{sec:f_Y}
In this section, we discuss the privatization and reconstruction of node labels. We again implement Generalized Randomized Response to add class-independent and symmetric noise to the labels. Here, the probabilities $p$ and $q$ are set as $\frac{e^{\epsilon_y}}{e^{\epsilon_y}+c-1}$ and $\frac{1}{e^{\epsilon_y}+c-1}$ respectively where $c$ indicates the number of classes and $\epsilon_y$ is the allocated label privacy budget. The server then collects these perturbed label vectors $\mathbf{y}_v'$ from all labeled nodes and can compute graph-level aggregates from the collected data. We can further leverage frequency estimation to obtain unbiased estimates of these aggregates. Let $\boldsymbol{\pi} \in \mathbb{R}^c$ denote the label distribution vector over $n$ nodes and $\boldsymbol{\lambda}'$ denote the label distribution observed by the server containing the sample proportions corresponding to $\boldsymbol{\pi}$. Then an unbiased estimate of $\boldsymbol{\pi}$ is obtained as~\cite{DBLP:conf/edbt/0009WH16}
\begin{equation}\label{eq:poly_rr_pi}
    \boldsymbol{\Tilde{\pi}} = \mathbf{P}^{-1}\boldsymbol{\lambda}',
\end{equation}
where $\mathbf{P}$ refers to the label transition matrix with diagonal entries $p$ and non-diagonal entries $q$. The variance in estimating $\boldsymbol{\tilde{\pi}}$ is obtained from the diagonal elements of the  dispersion matrix
\mbox{$\operatorname{disp}(\boldsymbol{\tilde{\pi}}) = (n-1)^{-1}\mathbf{P}^{-1}(\boldsymbol{\lambda}'^\delta - \boldsymbol{\lambda}'\boldsymbol{\lambda}'^\intercal)(\mathbf{P}^\intercal)^{-1}$}
where $\boldsymbol{\lambda}'^\delta$ is a diagonal matrix with the same diagonal elements as those of $\boldsymbol{\lambda}'$ and $(\cdot)^\intercal$ indicates the transpose operation.

\begin{algorithm}[t]
\caption{Node Label Reconstruction}\label{alg:f_Y}
\KwIn{$\mathcal{G}=(\mathcal{V},\mathcal{E})$, perturbed labels $\mathbf{Y}'$, number of classes $c$, privacy budget $\epsilon$, number of hops $K$}
\KwOut{Reconstructed labels $\mathbf{\Tilde{y}}_v \:\:\forall v\in \mathcal{V}^L$}

\SetKwProg{Fn}{Function}{:}{end}
\Fn{$f_Y(\mathcal{G},\mathbf{Y}',\epsilon,c,K)$}{

$\boldsymbol{\lambda}'_v \leftarrow \mathbf{y}'_v \:\:\forall v \in \mathcal{V}^L$

$\boldsymbol{\lambda}'_v \leftarrow  \Vec{0} \quad \forall v \in \mathcal{V}^U$

\For{$k=1,\hdots,K$}{
    \For{$v \in \mathcal{V}$}{
    $\boldsymbol{\lambda}'_{v} \leftarrow \operatorname{MEAN}(\{\boldsymbol{\lambda}'_{v}\} \cup \{\boldsymbol{\lambda}'_{u}, \forall u \in \mathcal{N}(v)\})$
    }
}

$p \leftarrow \frac{e^{\epsilon}}{e^{\epsilon}+c-1}$ and $q \leftarrow \frac{1}{e^{\epsilon}+c-1}$

Construct transition matrix $\mathbf{P} \in \mathbb{R}^{c\times c}$ using $p$ and $q$

\For{$v \in \mathcal{V}^L$}{
    $\boldsymbol{\Tilde{\pi}}_{v} \leftarrow \mathbf{P}^{-1}\boldsymbol{\lambda}'_{v}$

    $\mathbf{\Tilde{y}}_v \leftarrow \textnormal{one-hot}(\boldsymbol{\Tilde{\pi}}_v)$
}
\Return $\mathbf{\Tilde{y}}_v \:\:\forall v\in \mathcal{V}^L$
}
\end{algorithm}

Similar to node features, we can obtain frequency estimates of labels at a sub-graph level using (\ref{eq:poly_rr_pi}). \cref{alg:f_Y} shows the mechanism for label reconstruction via frequency estimation. Unlike node features, node labels are only provided for a small subset of nodes in the training graph. Direct propagation over all nodes is not feasible in such graphs. However, unlabeled nodes are also important for message propagation during multi-hop aggregation. To this end, we perform masked propagation to obtain label frequencies in multi-hop node neighborhoods. We mask the unlabeled nodes by setting their label vectors as an all-0 vector of size $c$. We then use the obtained neighborhood-level label distribution $\boldsymbol{\lambda}'_v$ to estimate the true label distribution $\boldsymbol{\Tilde{\pi}}_v$ using (\ref{eq:poly_rr_pi}). Assuming homophily, we assign the reconstructed neighborhood label distribution as the node's new label $\mathbf{\Tilde{y}}_v$ after one-hot encoding such that $\mathbf{\Tilde{y}}_v \in \{0,1\}^c$.
With the reconstructed labels, we define the GNN objective as
%\begin{equation}
%    \min\limits_{\mathbf{W}} \mathcal{L}_{gnn} = \sum\limits_{v \in \mathcal{V}^L} l(\mathbf{\hat{y}}_v,\mathbf{\Tilde{y}}_v),
%\end{equation}
%where $\mathbf{W}$ denotes the learnable parameters of a GNN model, $l(\cdot)$ is the cross-entropy loss, and $\mathbf{\Tilde{y}}_v$  ($\mathbf{\hat{y}}_v$) denotes the reconstructed (predicted) label of node $v$. 
$\mathcal{L}_{gnn} = \sum\limits_{v \in \mathcal{V}^L} \ell(\mathbf{\hat{y}}_v,\mathbf{\Tilde{y}}_v)$,
where $\ell(\cdot)$ is the cross-entropy loss, $\mathbf{\Tilde{y}}_v$  and $\mathbf{\hat{y}}_v$ denote the reconstructed and predicted label of node $v$, respectively. 

\begin{algorithm}[tbh]
\caption{Reconstruction based Private GNN (RGNN)}\label{alg:rgnn}
\KwIn{Graph $\mathcal{G} = (\mathcal{V}, \mathcal{E})$, features $\mathbf{X}$, labels $\mathbf{Y}$, feature privacy budget $\epsilon_x$, label privacy budget $\epsilon_y$, feature randomizer $\mathcal{M}_X$, label randomizer $\mathcal{M}_Y$, feature reconstructor $f_X$, label reconstructor $f_Y$, hops for feature reconstruction $K_X$, hops for label reconstruction $K_Y$, clustering algorithm $h_\mathcal{E}$, number of clusters $C$, GNN model $h_\mathcal{G}$, regularization parameter $\alpha$}

\tcp{\textcolor{blue}{randomization}}
$\mathbf{x}_v' \leftarrow \mathcal{M}_X(\mathbf{x}_v; \epsilon_x) \:\: \forall v \in \mathcal{V}$

$\mathbf{y}_v' \leftarrow \mathcal{M}_Y(\mathbf{y}_v; \epsilon_y) \:\: \forall v \in \mathcal{V}^L$

\tcp{\textcolor{blue}{clustering}}
$\{\mathcal{C}_r\}_{r=1}^{C} \leftarrow h_{\mathcal{E}}(\mathcal{G})$ such that $\mathcal{C}_r \bigcap \mathcal{C}_s = \emptyset$ 

$\mathbf{b}_r' \leftarrow \frac{1}{|\mathcal{C}_r^L|}\sum\limits_{v\in\mathcal{C}_r^L}\mathbf{y}'_v \:\: \textnormal{for } r=\{1,..,C\}$

\tcp{\textcolor{blue}{reconstruction}}
$\mathbf{\Tilde{X}} \leftarrow f_X(\mathcal{G},\mathbf{X}',\gamma,\epsilon_x,d,m,K_x) $ 

$\mathbf{\Tilde{Y}}^L \leftarrow f_Y(\mathcal{G},\mathbf{Y}',\epsilon_y,c,K_y)$
    
obtain $\mathbf{\Tilde{b}}_r$ from $\mathbf{b}_r'\:\:\textnormal{for } r=\{1,\hdots,C\}$ using (\ref{eq:poly_rr_pi})

\tcp{\textcolor{blue}{GNN training}}
\For{$t=1,\hdots,T$}{

    $\mathbf{\hat{Y}} \leftarrow h_\mathcal{G}(\mathbf{\Tilde{X}},\mathcal{G};\mathbf{W})$ 

    $\mathbf{\hat{b}}_r \leftarrow \frac{1}{|\mathcal{C}_r^L
    |}\sum\limits_{v\in\mathcal{C}_r^L}\mathbf{\hat{y}}_v \: \textnormal{for } r=\{1,\hdots,C\}$ 

$\mathcal{L}_{gnn} \leftarrow \frac{1}{|\mathcal{V}^L|}\sum\limits_{v\in\mathcal{V}^L} \ell(\mathbf{\hat{y}}_v, \mathbf{\Tilde{y}}_v)$

$\mathcal{L}_{llp} \leftarrow \frac{1}{C} \sum_{r=1}^{C} D_{KL}(\mathbf{\hat{b}}_r||\mathbf{\Tilde{b}}_r)$

$\mathbf{W}^{t+1} \leftarrow \mathbf{W}^t - \eta \nabla \left(\mathcal{L}_{gnn} + \alpha\mathcal{L}_{llp} \right)$
}
\end{algorithm}

\begin{table*}[ht]
\scriptsize
    \centering
    \caption{Dataset statistics}
    \begin{tabular}{l>{\centering}m{1.2cm}>{\centering}m{1.2cm}>{\centering}m{1cm}>{\centering}m{1cm}>{\centering}m{1.2cm}>{\centering}m{0.6cm}>{\centering}m{0.6cm}>{\centering}m{1.2cm}>{\centering\arraybackslash}m{1.6cm}}
    \toprule
     Dataset &  $|\mathcal{V}|$ &  $|\mathcal{E}|$ & Avg. Deg. & $d$ &  Sparsity in $d$(\%) & $c$ & $d'$ & Sparsity in $d'$(\%) & Accuracy(\%) \\
    \midrule
    Citeseer &    3,327 &    4,552 &  2.7 & 3,703 & 99.2 & 6 & 53 & 56.0 & 74.7$\pm$1.2\\
        Cora &    2,708 &    5,278 &  3.9 & 1,433 & 98.7 & 7 & 58 & 73.8 & 87.5$\pm$0.9\\
        DBLP &   17,716 &   52,867 &  6.0 & 1,639 & 99.7 & 4 & 33 & 85.3 & 84.7$\pm$0.3\\
    Facebook &   22,470 &  170,912 & 15.2 & 4,714 & 99.7 & 4 & 48 & 82.8 & 94.2$\pm$0.3\\
    \hline
      German &      955 &    9,900 & 20.9 &    46 & 74.2 & 2 & -  &  -   & 88.1$\pm$4.0\\
     Student &      577 &    4,243 & 14.7 &    60 & 71.1 & 2 & -  &  -   & 86.3$\pm$3.8\\
    \bottomrule
    \end{tabular}
    \label{tab:datasets}
\end{table*}

\subsection{Reconstruction with Label Proportions}
The variance in node label estimation depends on neighborhood size which could be small for some nodes even after multi-hop aggregation. To alleviate this dependence on neighborhoods, we further introduce a reconstruction-based regularization constraint that incorporates the concept of learning from label proportions. This LLP-based regularization term enforces similarity constraints on bag-level label distributions instead of node-level label distributions. We introduce this LLP objective during training by creating bags that contain subsets of nodes in the graph. To this end, we utilize edge-based graph clustering algorithms and partition $\mathcal{G}$ into $C$ disjoint clusters, $\mathcal{C}_1,\hdots,\mathcal{C}_C$. In this work, we use METIS~\cite{DBLP:journals/siamsc/KarypisK98} to partition the graph. METIS is a multilevel $C$-way graph partitioning scheme that partitions $\mathcal{G}$ containing $N$ nodes into $C$ disjoint clusters such that each cluster contains around $N/C$ nodes and the number of inter-cluster edges is minimized. In the presence of homophily, nodes in the same cluster most likely share similar labels due to their proximity in $\mathcal{G}$. LLP on bags containing similarly labeled nodes approximates the standard supervised learning and is preferable for training~\cite{yu2015learning}. 

% Each of the $C$ clusters is equivalent to a bag $\mathcal{B}$ in LLP described in \cref{sec:llp}. 
For each bag/cluster $\mathcal{C}_i$, we compute its label proportion as the mean of the label distribution of nodes in that bag/cluster. However, due to privacy constraints, the server cannot access true labels $\mathbf{y}_v$. So, we use randomized labels $\mathbf{y}'_v$ to obtain the perturbed bag proportions $\mathbf{b}_i'$ for each $\mathcal{C}_i$. Note that we only use the nodes that are also in $\mathcal{V}^L$ (denoted as $\mathcal{C}_i^L$ in Line 10 of \cref{alg:rgnn}) to obtain label proportions of bag $\mathcal{C}_i$. Since $\mathbf{b}_i'$ is an observed estimate of label proportion at a sub-graph level, we can obtain an unbiased estimate of the bag label proportion using frequency estimation. 
% We then formulate the LLP-based learning objective as
%\begin{equation}
%    \min\limits_{\mathbf{W}} \mathcal{L}_{llp} = \sum\limits_{i=1}^{C}kl(\mathbf{\hat{b}}_i,\mathbf{\Tilde{b}}_i),
%\end{equation}
% $\mathcal{L}_{llp} = \sum\limits_{i=1}^{C}kl(\mathbf{\hat{b}}_i,\mathbf{\Tilde{b}}_i)$,
We then formulate the LLP-based learning objective as \mbox{$\min\limits_{\mathbf{W}} \mathcal{L}_{llp} = \sum_{i=1}^{C}D_{KL}(\mathbf{\hat{b}}_i||\mathbf{\Tilde{b}}_i)$}, where $\mathbf{\hat{b}}_i$ is the predicted label proportion of bag $i$ obtained by aggregating the predicted labels $\mathbf{\hat{y}}_v$ of nodes in $\mathcal{C}_i^L$, $\mathbf{\Tilde{b}}_i$ is the reconstructed label proportion of bag $i$ obtained using (\ref{eq:poly_rr_pi}). Also, $\mathbf{W}$ denotes the learnable parameters of a GNN model denoted as $h_{\mathcal{G}}(\cdot)$.
The overall GNN objective with LLP regularization follows as
\begin{equation}
    \min\limits_{\mathbf{W}} \mathcal{L}_{gnn} + \alpha \mathcal{L}_{llp}, 
    %= \min\limits_{\mathbf{W}}\sum\limits_{v \in \mathcal{V}^L} l(\mathbf{\hat{y}}_v,\mathbf{\Tilde{y}}_v) + \alpha \sum\limits_{i=1}^{C}kl(\mathbf{\hat{b}}_i,\mathbf{\Tilde{b}}_i)
\end{equation}
where $\alpha$ controls the influence of the LLP-based regularization. The overall training procedure of our model is presented in \cref{alg:rgnn}. On the user side, randomization is performed on both features and labels (Lines 1-2). The server has access to graph adjacency and performs a clustering operation on it to obtain clusters to be used as bags for LLP (Lines 3-4). The server uses the perturbed data along with the graph adjacency to estimate node features and labels, and bag proportions via reconstruction as discussed prior (Lines 5-7). Finally, the server trains a GNN model to fit the reconstructed features and labels with the LLP loss as regularization (Lines 9-13).

\begin{theorem}\label{thrm:GRR-FS}
\cref{alg:rgnn} satisfies $(\epsilon_X + \epsilon_y)$-LDP where $\epsilon_X = \ln{\left(1 + \frac{m}{d}(e^{m\epsilon_x}-1)\right)}$.
\end{theorem}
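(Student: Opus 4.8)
The plan is to decompose \cref{alg:rgnn} into its privacy-critical step—the per-user randomization of features and labels in Lines 1--2—and the server-side computation (clustering, reconstruction, GNN training) in the remaining lines, and then argue that the former meets the claimed budget by sequential composition while the latter contributes nothing by post-processing. First I would fix an arbitrary user $v$ and observe that the only private information they hold is the pair $(\mathbf{x}_v, \mathbf{y}_v)$, from which the algorithm releases exactly the perturbed pair $(\mathbf{x}_v', \mathbf{y}_v')$ with $\mathbf{x}_v' = \mathcal{M}_X(\mathbf{x}_v;\epsilon_x)$ and $\mathbf{y}_v' = \mathcal{M}_Y(\mathbf{y}_v;\epsilon_y)$.

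Next I would invoke the two single-mechanism guarantees. By \cref{thm:GRR-FS_privacy_budget}, the feature randomizer GRR-FS satisfies $\epsilon_X$-LDP with $\epsilon_X = \ln\!\left(1 + \frac{m}{d}(e^{m\epsilon_x}-1)\right)$. The label randomizer $\mathcal{M}_Y$ is Generalized Randomized Response applied to a single categorical label with $p = e^{\epsilon_y}/(e^{\epsilon_y}+c-1)$ and $q = 1/(e^{\epsilon_y}+c-1)$, whose likelihood ratio between any two label inputs is bounded by $p/q = e^{\epsilon_y}$, so $\mathcal{M}_Y$ satisfies $\epsilon_y$-LDP. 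Applying sequential composition of LDP mechanisms~\cite{DBLP:journals/fttcs/DworkR14} to these two reports evaluated on the same user's private data, the joint release $(\mathbf{x}_v', \mathbf{y}_v')$ satisfies $(\epsilon_X + \epsilon_y)$-LDP. Since the server queries each user for their feature and label values only once, there is no degradation through repeated or adaptive querying, and this per-user guarantee holds uniformly over all $v$.

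Finally I would close the argument using post-processing immunity~\cite{DBLP:journals/fttcs/DworkR14}. The clustering $h_\mathcal{E}$, the perturbed bag proportions $\mathbf{b}_r'$, the reconstructors $f_X$ and $f_Y$ (including their multi-hop propagation steps), the cluster-proportion estimates $\mathbf{\Tilde{b}}_r$, and the entire GNN training loop take as input only the perturbed data $\{(\mathbf{x}_v',\mathbf{y}_v')\}$ together with the non-private graph structure $\mathcal{G}$; none of them access the true $\mathbf{X}$ or $\mathbf{Y}$. They are therefore (possibly randomized) functions of the output of an $(\epsilon_X+\epsilon_y)$-LDP mechanism and cannot weaken the guarantee, establishing the claim.

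The step I expect to require the most care is precisely this post-processing claim rather than the composition bookkeeping. The reconstruction routines mix a node's perturbed report with its neighbors' perturbed reports through graph propagation, so I must verify that this mixing is driven solely by the public adjacency information and never reintroduces any user's true $(\mathbf{x}_v,\mathbf{y}_v)$. This holds because propagation operates on one-hot encodings of the already-perturbed values and uses $\mathcal{G}$ only as a fixed, non-private averaging pattern, so the composite map from perturbed inputs to reconstructed quantities is legitimately a data-independent post-processing of the privatized reports.
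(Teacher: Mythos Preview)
Your proposal is correct and follows essentially the same approach as the paper's own proof: invoke \cref{thm:GRR-FS_privacy_budget} for the feature mechanism, note that GRR on labels gives $\epsilon_y$-LDP, combine via sequential composition, observe each user is queried once, and conclude by post-processing immunity. Your version is somewhat more detailed (e.g., explicitly bounding the GRR likelihood ratio and verifying that propagation uses only perturbed reports and the public graph), but the structure and key ingredients are identical.
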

\begin{proof}
    We show in \cref{thm:GRR-FS_privacy_budget} that GRR-FS provides $\epsilon_X$-LDP. Also, the label randomization mechanism $\mathcal{M}_Y$ which implements GRR ensures $\epsilon_y$-LDP. According to the compositional property of DP~\cite{DBLP:journals/fttcs/DworkR14}, the randomized feature and label data collectively provide $(\epsilon_X+\epsilon_y)$-LDP. The server collects query responses from each node independently only once, thus maintaining the privacy guarantee. Finally, due to the invariance of randomization mechanisms to post-processing~\cite{DBLP:journals/fttcs/DworkR14}, privacy protection is not compromised during any step in the reconstruction or training process. Therefore, \cref{alg:rgnn} satisfies $(\epsilon_X+\epsilon_y)$-LDP.
\end{proof}

\section{Experiments}
\subsection{Experimental Settings}
\subsubsection{Datasets}\label{sec:datasets}

We perform evaluations on four real-world benchmark datasets: Citeseer~\cite{DBLP:journals/aim/SenNBGGE08}, Cora~\cite{DBLP:journals/aim/SenNBGGE08} and DBLP~\cite{DBLP:conf/iclr/BojchevskiG18} are well-known citation datasets where nodes represent papers and edges denote citations. Each node is described by bag-of-words features and a label denoting its category. Facebook~\cite{musae} is a social network dataset with verified Facebook sites as nodes and mutual likes as links. Node features represent site descriptions and label indicates its category. Statistics of the datasets are presented in~\cref{tab:datasets}. Sparsity in $d$ highlights the imbalanced distribution of binary feature values as we discussed in \cref{sec:intro}. To reduce such feature matrix sparsity, we preprocess these real-world datasets by combining a fixed number of features (70, 25, 50, and 100 for Citeseer, Cora, DBLP, and Facebook respectively) into one representative feature resulting in a lower feature dimension indicated as $d'$ in \cref{tab:datasets}. This process only requires the server to communicate the number of features to be grouped together to the users and does not affect the privacy guarantees of RGNN. 

We also evaluate RGNN on two semi-synthetic datasets consisting of edges constructed based on ranked feature similarity. In German~\cite{10020610} dataset, nodes represent clients at a German bank and the label classifies clients as good or bad customers. In Student~\cite{10020610} dataset, nodes represent students at two Portuguese schools and the label indicates their final grade. Both of these datasets contain both numerical and categorical features and we discretize the numerical attributes based on the quantile distribution of their values. Compared to real-world datasets, both German and Student are smaller in scale with relatively balanced distribution in the feature domains. 

\begin{table*}[ht]
    \scriptsize
     \centering
     \caption{Accuracy of RGNN and baselines under $(\epsilon_X+\epsilon_y)$-LDP with $m=10$}
     \begin{tabular}{>{\centering}m{1.5cm}|>{\centering}m{1.2cm}|>{\centering}m{2cm}|c|c|c|c}
     \hline
     Dataset & $\epsilon_x$ & Model & $\epsilon_y=3$ & $\epsilon_y=2$ & $\epsilon_y=1$ & $\epsilon_y=0.5$ \\
     \hline
     \multirow{6}*{Citeseer} & \multirow{3}{*}{1} & GraphSAGE & 24.4 $\pm$ 1.3 & 24.3 $\pm$ 2.6 & 20.5 $\pm$ 1.2 & 19.4 $\pm$ 0.9 \\ 
                                            &    &     LPGNN & 45.6 $\pm$ 12.2 & 51.7 $\pm$ 1.8 & 46.4 $\pm$ 1.6 & 26.9 $\pm$ 5.9 \\
                                            &    &      RGNN & \textbf{55.7 $\pm$ 1.4} & \textbf{52.6 $\pm$ 2.7} & \textbf{47.1 $\pm$ 2.8} & \textbf{36.2 $\pm$ 4.7} \\
    \cline{2-7}
                        & \multirow{3}{*}{0.1}   & GraphSAGE & 25.2 $\pm$ 1.3 & 24.6 $\pm$ 1.1 & 19.4 $\pm$ 1.2 & 19.4 $\pm$ 1.2 \\ 
                                            &    &     LPGNN & \textbf{53.1 $\pm$ 1.9} & \textbf{51.3 $\pm$ 3.4} & \textbf{47.3 $\pm$ 1.8} & 31.1 $\pm$ 6.3 \\ 
                                            &    &      RGNN & 51.7 $\pm$ 2.1 & \textbf{51.3 $\pm$ 2.3} & 45.1 $\pm$ 3.2 & \textbf{34.6 $\pm$ 3.0} \\ 
    \hline
    \multirow{6}*{Cora} & \multirow{3}{*}{1}     & GraphSAGE & 31.5 $\pm$ 1.9 & 28.0 $\pm$ 1.6 & 25.9 $\pm$ 2.4 & 20.4 $\pm$ 4.4 \\ 
                                            &    &     LPGNN & 55.5 $\pm$ 15.4 & 39.5 $\pm$ 14.2 & 34.1 $\pm$ 8.4 & 37.8 $\pm$ 14.2 \\
                                            &    &      RGNN & \textbf{77.8 $\pm$ 2.0} & \textbf{75.5 $\pm$ 1.6} & \textbf{67.5 $\pm$ 3.8} & \textbf{41.9 $\pm$ 3.0} \\ 
    \cline{2-7}
                        & \multirow{3}{*}{0.1}   & GraphSAGE & 32.2 $\pm$ 1.9 & 28.5 $\pm$ 0.8 & 24.8 $\pm$ 2.4 & 21.0 $\pm$ 3.3 \\ 
                                            &    &     LPGNN & 68.5 $\pm$ 1.8 & 63.8 $\pm$ 5.4 & 60.6 $\pm$ 3.5 & \textbf{44.8 $\pm$ 13.2} \\ 
                                            &    &      RGNN & \textbf{77.0 $\pm$ 1.5} & \textbf{75.8 $\pm$ 2.0} & \textbf{66.6 $\pm$ 5.0} & 40.6 $\pm$ 2.3 \\ 
    \hline
    \multirow{6}*{DBLP} & \multirow{3}{*}{1}     & GraphSAGE & 50.3 $\pm$ 1.4 & 50.2 $\pm$ 1.5 & 45.5 $\pm$ 0.9 & 42.4 $\pm$ 2.4 \\ 
                                            &    &     LPGNN & 65.2 $\pm$ 0.5 & 65.0 $\pm$ 0.4 & 60.7 $\pm$ 6.7 & 46.6 $\pm$ 1.5 \\ 
                                            &    &      RGNN & \textbf{71.2 $\pm$ 0.4} & \textbf{70.7 $\pm$ 0.7} & \textbf{70.0 $\pm$ 1.0} & \textbf{65.5 $\pm$ 2.3} \\ 
    \cline{2-7}
                        & \multirow{3}{*}{0.1}   & GraphSAGE & 49.1 $\pm$ 1.7 & 48.7 $\pm$ 2.4 & 44.5 $\pm$ 1.7 & 43.8 $\pm$ 1.4 \\ 
                                            &    &     LPGNN & 70.2 $\pm$ 1.1 & 68.5 $\pm$ 1.8 & 64.6 $\pm$ 4.4 & 59.9 $\pm$ 3.0 \\ 
                                            &    &      RGNN & \textbf{71.4 $\pm$ 0.7} & \textbf{71.0 $\pm$ 0.8} & \textbf{69.9 $\pm$ 0.9} & \textbf{65.8 $\pm$ 2.0} \\ 
    \hline
    \multirow{6}*{Facebook} & \multirow{3}{*}{1} & GraphSAGE & 46.4 $\pm$ 1.3 & 45.2 $\pm$ 1.3 & 35.5 $\pm$ 1.1 & 29.7 $\pm$ 1.4 \\ 
                                            &    &     LPGNN & 70.4 $\pm$ 2.1 & 69.3 $\pm$ 1.7 & 67.6 $\pm$ 1.9 & 58.7 $\pm$ 10.8 \\ 
                                            &    &      RGNN & \textbf{76.3 $\pm$ 0.9} & \textbf{75.9 $\pm$ 1.1} & \textbf{72.6 $\pm$ 1.6} & \textbf{64.5 $\pm$ 1.6} \\ 
    \cline{2-7}
                        & \multirow{3}{*}{0.1}   & GraphSAGE & 42.0 $\pm$ 1.3 & 41.4 $\pm$ 1.5 & 33.5 $\pm$ 1.2 & 29.6 $\pm$ 1.3 \\ 
                                            &    &     LPGNN & 76.3 $\pm$ 0.9 & 75.8 $\pm$ 0.8 & 73.1 $\pm$ 0.3 & \textbf{67.4 $\pm$ 1.3} \\ 
                                            &    &      RGNN & \textbf{76.5 $\pm$ 0.8} & \textbf{76.1 $\pm$ 1.2} & \textbf{73.2 $\pm$ 1.2} & 63.4 $\pm$ 1.1 \\ 
    \hline
    \multirow{6}*{German} &  \multirow{3}*{1}    & GraphSAGE & 76.8 $\pm$ 3.1 & 73.1 $\pm$ 4.3 & 70.9 $\pm$ 2.8 & 70.1 $\pm$ 4.2 \\ 
                                            &    &     LPGNN & 69.6 $\pm$ 1.5 & 69.6 $\pm$ 1.5 & 69.6 $\pm$ 1.5 & 69.6 $\pm$ 1.5 \\ 
                                            &    &      RGNN & \textbf{82.2 $\pm$ 3.5} & \textbf{82.3 $\pm$ 3.5} & \textbf{82.2 $\pm$ 6.2} & \textbf{83.1 $\pm$ 7.9} \\ 
    \cline{2-7}
                        &  \multirow{3}*{0.1}    & GraphSAGE & 74.1 $\pm$ 5.2 & 72.3 $\pm$ 2.9 & 70.8 $\pm$ 2.6 & 69.3 $\pm$ 6.3 \\ 
                                            &    &     LPGNN & 74.7 $\pm$ 5.0 & 73.1 $\pm$ 7.0 & 71.3 $\pm$ 6.7 & 72.8 $\pm$ 5. \\ 
                                            &    &      RGNN & \textbf{81.9 $\pm$ 4.4} & \textbf{81.9 $\pm$ 4.2} & \textbf{82.9 $\pm$ 6.9} & \textbf{83.2 $\pm$ 7.1} \\ 
    \hline
    \multirow{6}*{Student} & \multirow{3}*{1}    & GraphSAGE & 73.6 $\pm$ 4.5 & 68.9 $\pm$ 3.1 & 63.8 $\pm$ 7.8 & 56.2 $\pm$ 3.8 \\ 
                                            &    &     LPGNN & \textbf{90.3 $\pm$ 2.4} & \textbf{89.9 $\pm$ 2.0} & 85.1 $\pm$ 4.1 & 76.7 $\pm$ 9.7 \\ 
                                            &    &      RGNN & 88.9 $\pm$ 1.6 & 87.8 $\pm$ 1.2 & \textbf{88.2 $\pm$ 0.9} & \textbf{82.1 $\pm$ 3.6} \\ 
    \cline{2-7}
                        &  \multirow{3}*{0.1}    & GraphSAGE & 73.2 $\pm$ 4.7 & 68.8 $\pm$ 4.9 & 62.2 $\pm$ 6.7 & 56.9 $\pm$ 3.6 \\ 
                                            &    &     LPGNN & 88.1 $\pm$ 1.9 & \textbf{89.7 $\pm$ 1.9} & \textbf{87.4 $\pm$ 2.2} & 78.1 $\pm$ 9.7 \\ 
                                            &    &      RGNN & \textbf{88.5 $\pm$ 1.7} & 88.5 $\pm$ 1.1 & \textbf{87.4 $\pm$ 1.5} & \textbf{81.7 $\pm$ 4.2} \\ 
    \hline
    \end{tabular}
    \label{tab:compare_GRR-FS}
\end{table*}

\subsubsection{Baselines} 
For all datasets, we compare RGNN against the representative GNN model GraphSAGE. We train GraphSAGE directly on the perturbed features and labels obtained after randomization. Furthermore, we demonstrate RGNN's efficiency by comparing it against LPGNN~\cite{DBLP:conf/ccs/SajadmaneshG21} which performs node classification in the same node privacy scenario as described in this paper. For LPGNN, we use the multi-bit mechanism to perturb features and RR to perturb labels as proposed in the original paper. 

\subsubsection{Experimental Setup} 
We implement a two-layer GNN model with 16 hidden dimensions in our experiments. We use GraphSAGE as the base GNN model unless stated otherwise. We also implement GCN and GAT with four attention heads. For all models, we use ReLU~\cite{agarap2019deep} as the non-linear activation followed by dropout and train with Adam~\cite{DBLP:journals/corr/KingmaB14} optimizer. We vary the propagation parameters $K_x$ and $K_y$ among \{2, 4, 8, 16\}. For a fair comparison, we use the same propagation parameters in both RGNN and LPGNN for each dataset. We further vary the hyperparameters $C$ among \{4, 8, 16, 32, 64, 128, 256\} and $\alpha$ among \{0.01, 0.1, 1, 10, 20\}. 
% We optimize model hyperparameters based on the validation set. 
To study the performance of RGNN under varying privacy budgets, we vary $\epsilon_x$ within \{1, 0.1, 0.01\} and $\epsilon_y$ within \{3, 2, 1, 0.5\}. The total feature privacy budget provided by GRR-FS, $\epsilon_X$, is then computed as in \cref{thm:GRR-FS_privacy_budget}. For instance, for the Citeseer dataset the corresponding values for $\epsilon_X$ varies within \{8.3, 0.3, 0.02\}. For all datasets, we randomly split nodes into training, validation, and test sets with 50/25/25\% ratios, respectively. We report the average results with standard deviations of 5 runs trained for 100 epochs each for all experiments. All models are implemented using PyTorch-Geometric~\cite{fey2019fast} and are run on GPU Tesla V100 (32GB RAM). 
Our implementation is available at { \url{https://github.com/karuna-bhaila/RGNN}}.

\subsection{Experimental Results}
\subsubsection{Comparison with Baselines}
The rightmost column in \cref{tab:datasets} shows node classification accuracies with original features. We provide this result as the non-private baseline measure of performance on each dataset. For all datasets, we fix $m$=10 and also set LPGNN's sampling parameter to be $m$. We evaluate all models on the semi-synthetic datasets and the pre-processed real-world datasets with reduced feature sparsity and report the results in \cref{tab:compare_GRR-FS} under $(\epsilon_X+\epsilon_y)$-LDP. Since LPGNN also performs sampling before randomization, we reason that the total feature privacy for LPGNN is also amplified resulting in $\epsilon_X$ feature privacy. We observe that RGNN significantly improves classification performance with some utility trade-off for all datasets. In the case of the semi-synthetic Student dataset, RGNN can achieve better or similar accuracy compared to the non-private baseline for higher label privacy budgets. This can be attributed to its size and the higher degree of feature homophily present in the graph owing to the nature of its construction. Generally, RGNN also achieves significantly better accuracy compared to LPGNN. Out of 48 scenarios, RGNN outperforms LPGNN for 39 of them. The difference in accuracy is mostly minimal otherwise. As discussed previously, the multi-bit mechanism in LPGNN preserves the sparsity of the feature matrix when $m < d$. Furthermore, even after pre-processing to reduce dimensions, the datasets are not evenly distributed in terms of the feature domains. Compared with this, RGNN has comparable or improved performance despite providing rigorous privacy protection by randomizing every feature. These results imply that the reconstruction-based framework is effective in improving model performance under LDP constraints.  

\subsubsection{Choice of GNN}\label{sec:choice_gnn}
\begin{figure}
  \begin{minipage}[t]{.5\linewidth}
    \centering
    \includegraphics[width=\textwidth]{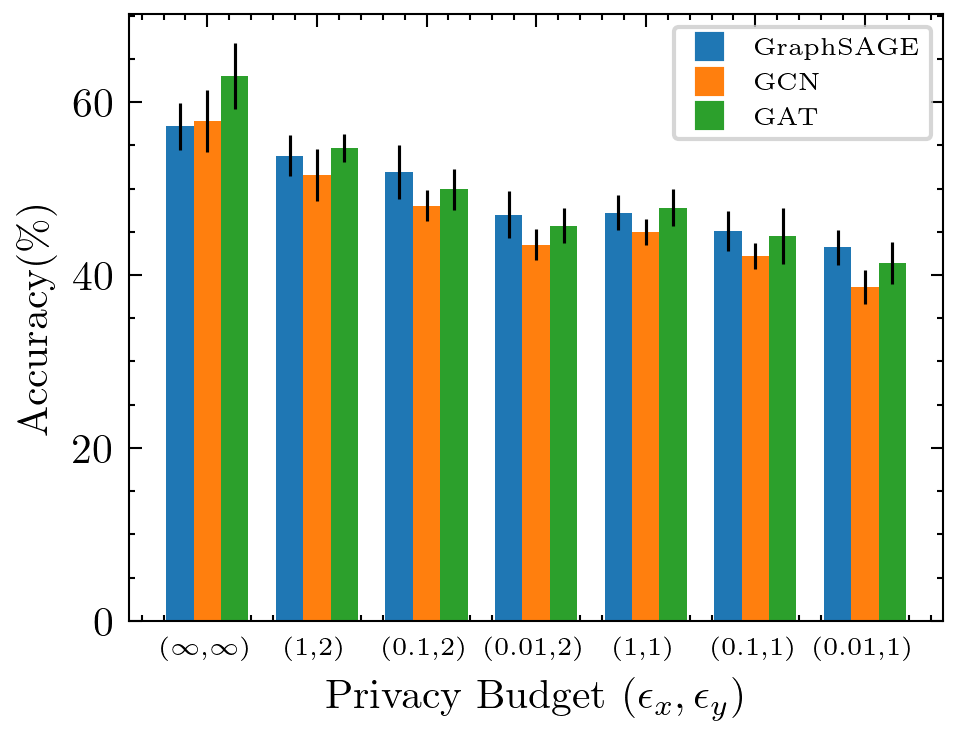}%
    \caption
      {Comparison of GNN architectures %under varying feature and label privacy budgets%
        \label{fig:choice_gnn}
      }
  \end{minipage}\hfill
  \begin{minipage}[t]{.47\linewidth}
    \centering
    \includegraphics[width=\textwidth]{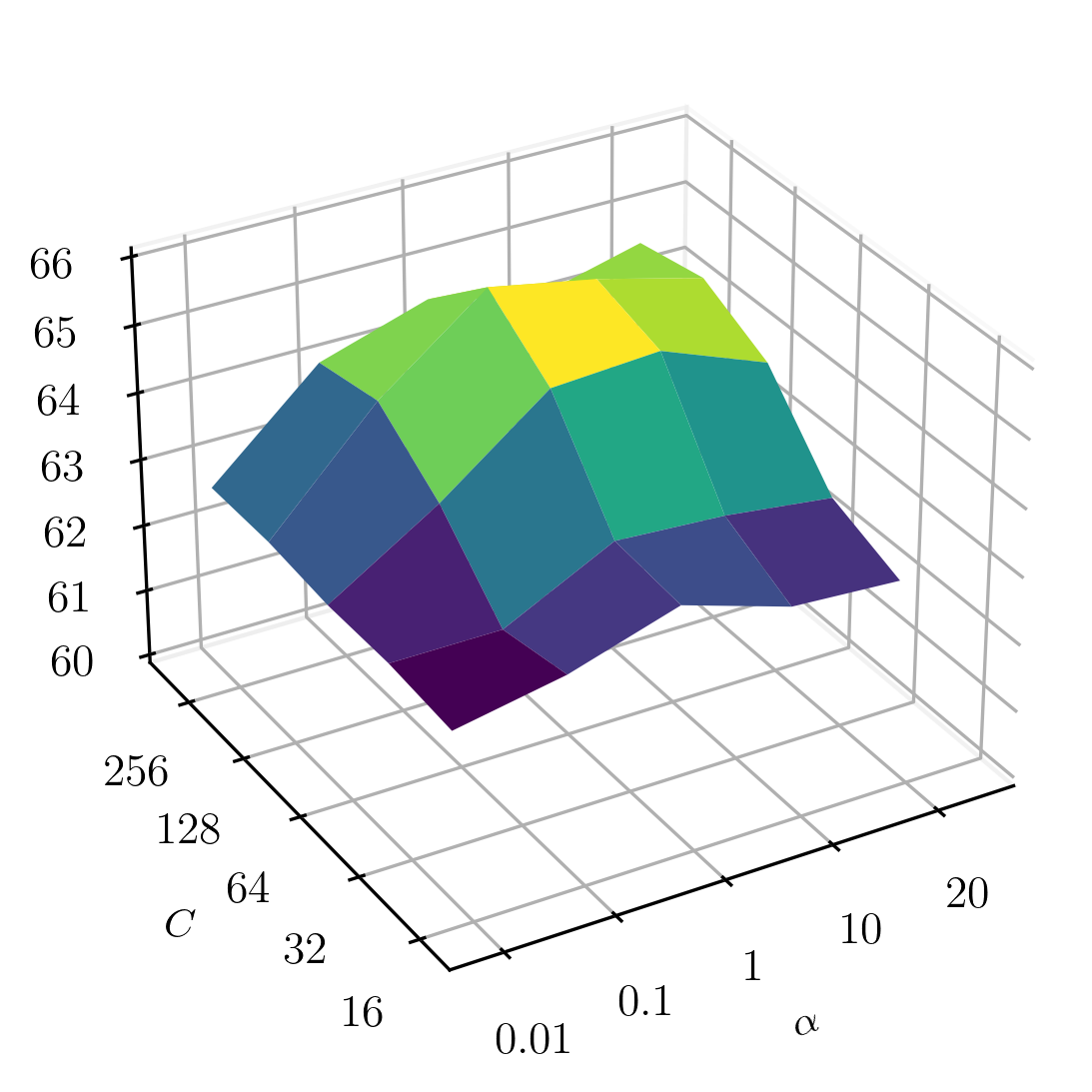}%
    \caption
      %{Hyperparameter study on DBLP \label{fig:alpha}}
      {Hyperparameter study of $\alpha$ and  $C$  \label{fig:alpha}}
      \end{minipage}
\end{figure}
In \cref{fig:choice_gnn}, we plot the performance of RGNN that uses GraphSAGE, GCN, and GAT as the backbone GNN. We only report the results on Citeseer as we observed similar trends on other datasets. In the non-private setting $(\infty,\infty)$, GAT outperforms both GCN and GraphSAGE. In the private scenario, GAT and GraphSAGE have relatively similar performances and GCN has comparatively lower utility. This trend suggests that the attention mechanism of GAT is able to effectively utilize the reconstructed features to compute attention coefficients. The neighborhood feature aggregator in GraphSAGE also benefits from such reconstruction. 
The results of this experiment demonstrate the flexibility of the proposed framework in fitting the reconstructed data to popular GNN models with satisfying performance.

\subsubsection{Ablation Study}\label{sec:ablation}
We conduct an ablation study to evaluate the contribution of the reconstruction and LLP components. To this end, we train three variants of RGNN: RGNN$\setminus f_X$ uses the perturbed features directly in training, RGNN$\setminus f_Y$ uses perturbed labels for training, and RGNN$\setminus$LLP without LLP regularization. With $m=10$, $\epsilon_x=1$ and $\epsilon_y=0.5$, we obtained the following results: (\textbf{40.2{\scriptsize$\pm$1.9}}, 34.0{\scriptsize$\pm$3.2}, 28.3{\scriptsize$\pm$6.6}, 23.6{\scriptsize$\pm$1.5}) for Citeseer and (\textbf{66.2{\scriptsize$\pm$1.4}}, 63.0{\scriptsize$\pm$1.0}, 64.9{\scriptsize$\pm$1.0}, 46.1{\scriptsize$\pm$1.3}) for DBLP with RGNN, and RGNN$\setminus$LLP, RGNN$\setminus f_Y$, RGNN$\setminus f_X$ respectively.
We observe that the performance of RGNN$\setminus f_X$ is significantly worse than that of RGNN, which highlights the contribution of denoising node features via frequency estimation. The performance of RGNN is also better than that of RGNN$\setminus f_Y$ and RGNN$\setminus$LLP,  showing that label reconstruction and LLP-based regularization improves model performance. 

% We conduct an ablation study to understand the importance of the reconstruction and LLP components. To demonstrate the effectiveness of feature reconstructor $f_X$, we train a variant of RGNN, RGNN$\setminus f_X$, which uses the perturbed features directly in training. Similarly, we also train a variant RGNN$\setminus f_Y$ that uses perturbed labels for training to show the importance of the label reconstructor. Lastly, we investigate the contribution of LLP by training a variant RGNN$\setminus$LLP without LLP regularization. The results are reported in \cref{tab:ablation} for Citeseer and DBLP datasets under $(m\epsilon_x+\epsilon_y)$-LDP with $m=10$, $\epsilon_x=1$ and $\epsilon_y=0.5$. 
% We observe that the performance of RGNN$\setminus f_X$ is significantly worse than that of RGNN, which highlights the contribution of denoising node features via frequency estimation. The performance of RGNN is also better than that of RGNN$\setminus f_Y$ and RGNN$\setminus$LLP which demonstrates that the label reconstructor and LLP-based regularization helps to improve model performance.    

\subsubsection{Hyperparameter Sensitivity}
We also analyze the influence of the LLP-based regularization term $\mathcal{L}_{llp}$. For this purpose, we vary $\alpha$ as $\{0.01, 0.1, 1, 10, 20\}$ and number of clusters $C$ as $\{16, 32, 64, 128, 256\}$ and report the results in \cref{fig:alpha} for a fixed privacy budget at $\epsilon_x=1$ and $\epsilon_y=0.5$. Due to space limitations, we only report the results for the DBLP dataset. We observe that generally as $\alpha$ increases, the performance increases and then decreases with the best performance at $\alpha \in \{0.1, 1, 10\}$. Also, performance increases as $C$ increases and the best results are obtained at $C=128$ for this dataset. Note that the optimal value of $C$ may vary depending on the size of the dataset which determines the bag size during clustering. The results demonstrate that the learning framework can benefit from the additional supervision provided by reconstructed LLP at a sub-graph level. 

\subsubsection{Propagation Parameter Study}\label{sec:kx_ky}
\begin{figure}[htb]
\begin{subfigure}{0.22\textwidth}
 \centerline{\includegraphics[width=\textwidth]{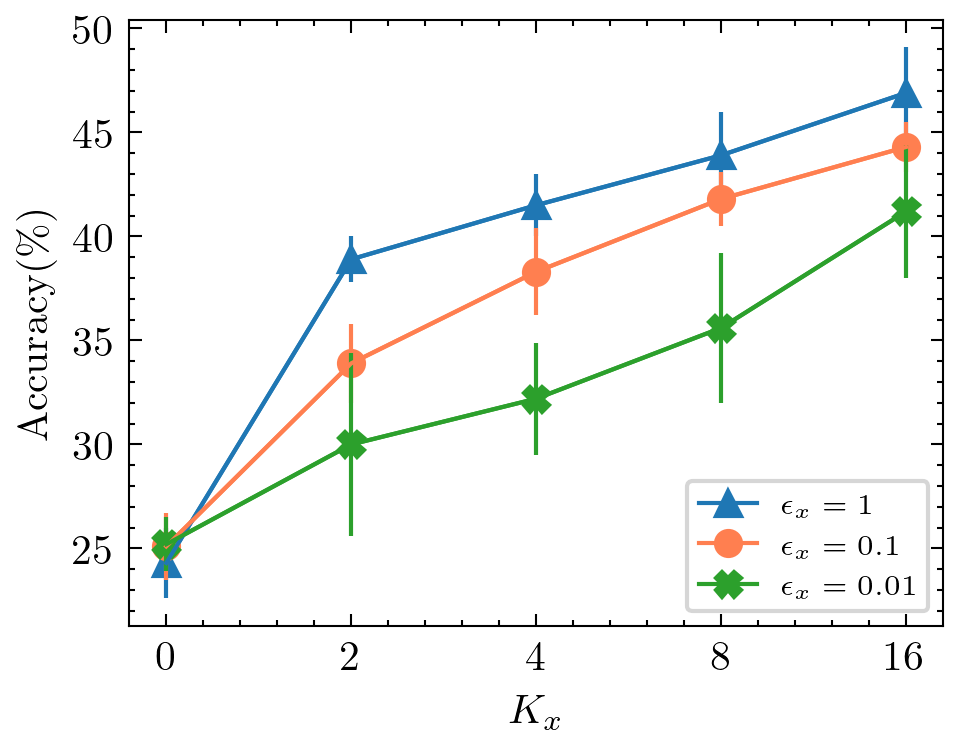}}
 \caption{$K_x$ on Citeseer}
\end{subfigure}
\hfill
\begin{subfigure}{0.22\textwidth}
 \centerline{\includegraphics[width=\textwidth]{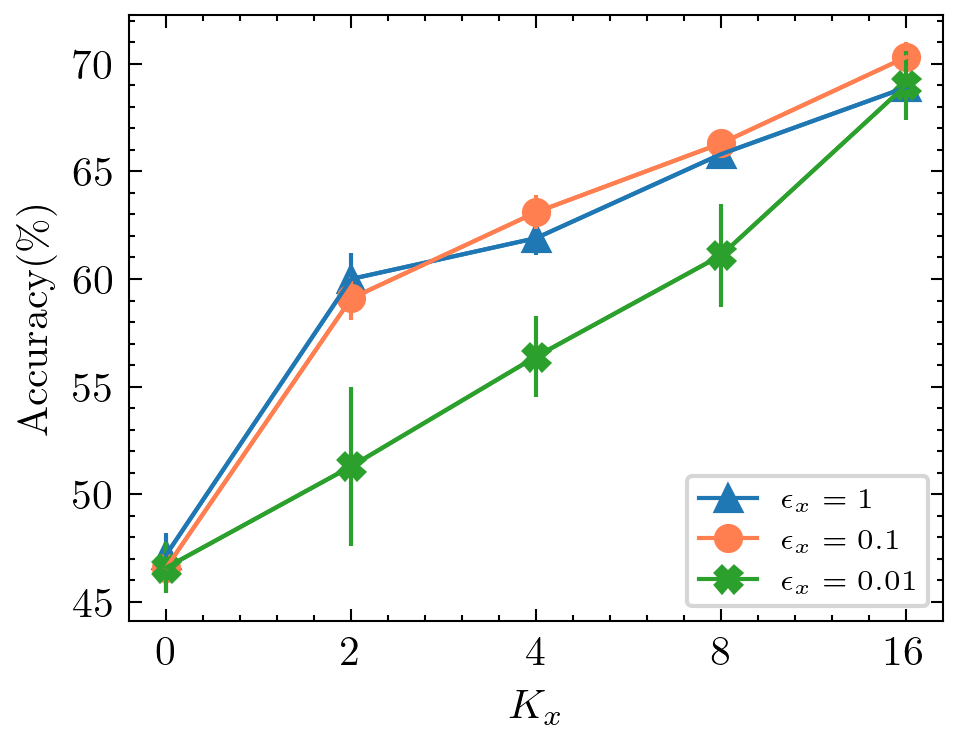}}
 \caption{$K_x$ on DBLP}
\end{subfigure}
\hfill
\begin{subfigure}{0.22\textwidth}
 \centerline{\includegraphics[width=\textwidth]{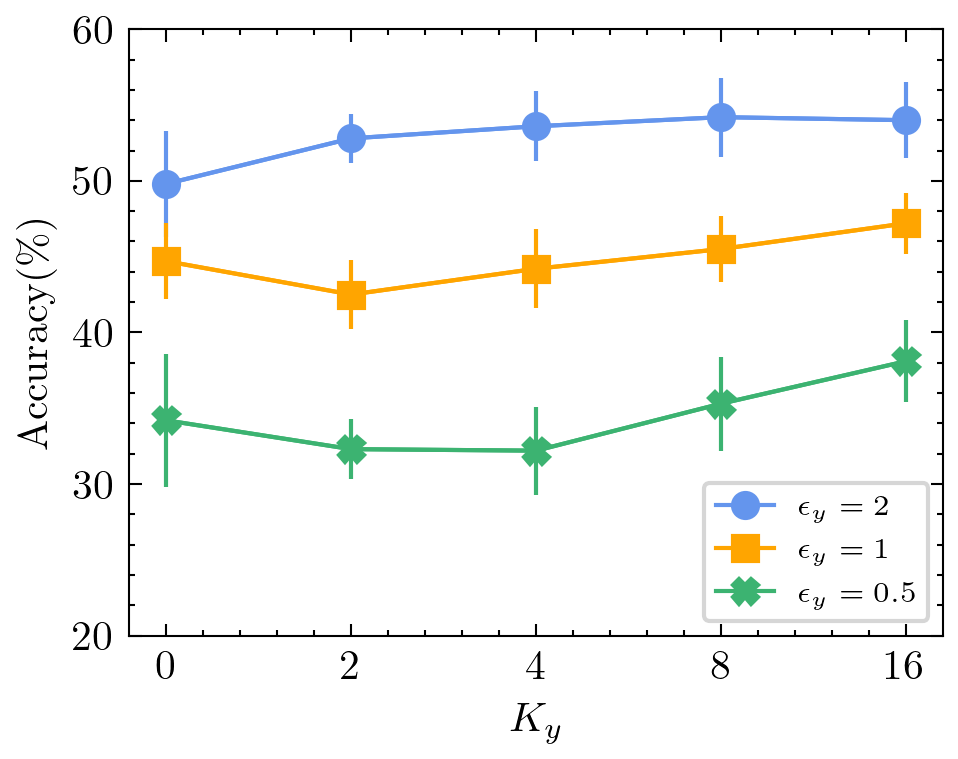}}
 \caption{$K_y$ on Citeseer}
\end{subfigure}
\hfill
\begin{subfigure}{0.22\textwidth}
 \centerline{\includegraphics[width=\textwidth]{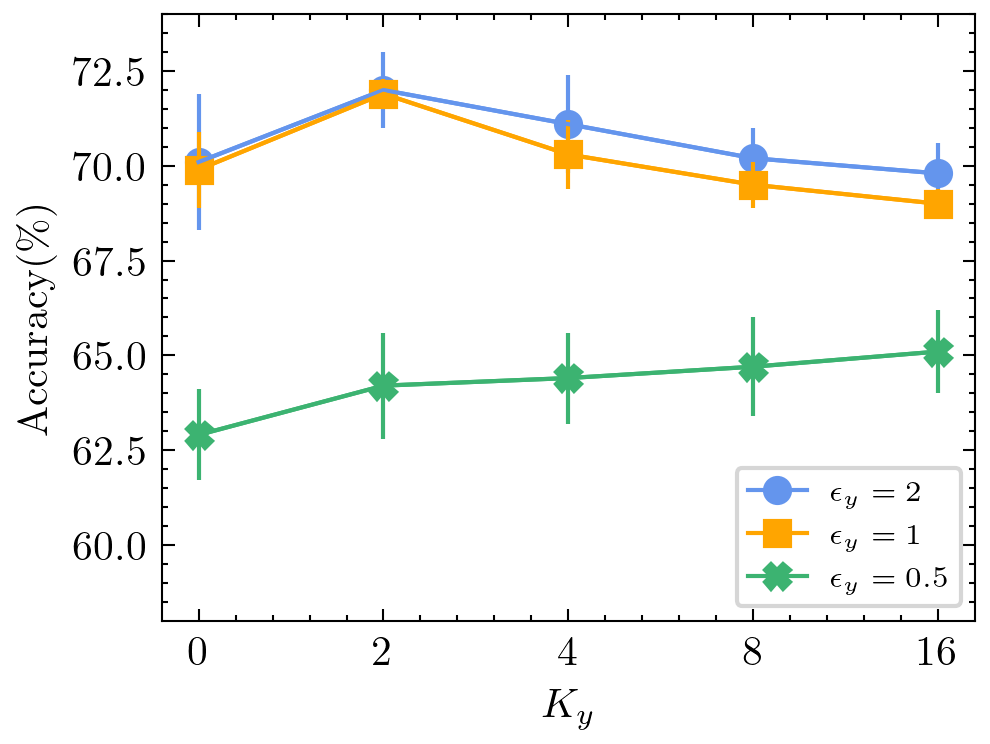}}
 \caption{$K_y$ on DBLP}
\end{subfigure}
\hfill
\caption{Influence of propagation parameters $K_x$ and $K_y$ on the performance of RGNN}
\label{fig:kx_ky}
\end{figure}
We further investigate the effect of propagation on the reconstruction framework. We vary the propagation parameters $K_x$ and $K_y$ among \{0, 2, 4, 8, 16\} and report the results in \cref{fig:kx_ky} for different values of $\epsilon_x$ and $\epsilon_y$ while fixing $m=10$. Due to space limit, we only report the results on Citeseer and DBLP. We observe that there is a drastic improvement in performance when increasing $K_x$ from 0 to 2 for both datasets. The performance further improves as $K_x$ increases for all values of $\epsilon_x$. This empirically proves that multi-hop propagation is effective in improving the estimates of node features during reconstruction.

Although the performance gain is not as drastic compared to $K_x$, model utility generally rises as $K_y$ increases. For Citeseer, at lower $\epsilon_y$, larger values of $K_y$ are needed to significantly improve performance. For DBLP, higher values of $K_y$ turn out to be detrimental to model performance for larger $\epsilon_y$ but beneficial for small $\epsilon_y$. We speculate that the comparatively higher node degrees in DBLP result in over-smoothing of the labels during multi-hop aggregation when lesser noise is added to them. 
This observation is also supported by our results on Facebook whose optimal $K_x$ \& $K_y$ both turn out to be 4. We generally conclude that RGNN can benefit from higher values of $K_x$ and $K_y$, especially with lower privacy budgets, but the selection of these parameters should depend on the desired privacy budget and average node degree of the graph.    

\section{Conclusion}
In this work, we present a reconstruction-based GNN learning framework in the node privacy setting, where features and labels are kept private from an untrusted aggregator. We first derive a general perturbation mechanism that incorporates sampling and is implemented at the user level to randomize multi-dimensional features. We then propose a flexible training framework that can significantly improve the privacy-utility tradeoff of the learner. The proposed method utilizes statistical frequency estimation to approximately reconstruct node features and labels. To reduce estimation variance, we incorporate a simple propagation mechanism that aggregates information from multi-hop neighborhoods before computing frequency estimates. We also introduce a regularization technique that uses label proportions of graph clusters to supervise the learning process at a sub-graph level. Our experiments demonstrated that our method generalizes well across various real-world datasets and GNN architectures while providing rigorous privacy guarantees compared to baseline methods.

\section*{Acknowledgements}
This work was supported in part by NSF 1946391 and 2119691.

\bibliographystyle{IEEEtran}  
\bibliography{references}  

\end{document}